
\documentclass[journal]{IEEEtran}
\usepackage{threeparttable}
\usepackage{adjustbox}
\usepackage{multirow} 
\usepackage{booktabs}
\usepackage{amsmath,amsfonts}
\usepackage{algorithmic}
\usepackage{algorithm}
\usepackage{array}
\usepackage{color}
\usepackage{textcomp}
\usepackage{stfloats}
\usepackage{url}
\usepackage{verbatim}
\usepackage{graphicx}
\usepackage{epstopdf}
\usepackage{epsfig}
\usepackage{subfigure}
\usepackage{afterpage}
\hyphenation{op-tical net-works semi-conduc-tor IEEE-Xplore}
\newtheorem{theo}{Theorem}

\newtheorem{proof}{Proof}

\begin{document}

\title{One-step Multi-view Clustering With Adaptive
Low-rank Anchor-graph Learning}

\author{ 
Zhiyuan~Xue,
Ben~Yang,~\IEEEmembership{Member,~IEEE,}
Xuetao~Zhang,~\IEEEmembership{Member,~IEEE,}
Fei~Wang,~\IEEEmembership{Member,~IEEE,}
and Zhiping~Lin,~\IEEEmembership{Senior~Member,~IEEE}
\thanks{This work was supported in part by the National Natural Science Foundation of China (No. 62088102, No. 62403372), the National Postdoctoral Innovative Talents Support Program (No. BX20230283), the Shaanxi Province "Sanqin Bochuang" Talent Support Plan (No. 2024SQBC005), the Project funded by China Postdoctoral Science Foundation (No. 2023M742791), the Natural Science Basic Research Program of Shaanxi Province (No. 2023-JC-YB-486, No. 2024-JC-YBQN-0658), the Natural Science Foundation of Sichuan Province (No. 2025ZNSFSC1496), and the Fundamental Research Funds for the Central Universities (No. xzy012023137)(\textit{Corresponding Author: Xuetao Zhang})}
\thanks{Zhiyuan Xue, Ben Yang, Xuetao Zhang, and Fei Wang are with the National Key Laboratory of Human–Machine Hybrid Augmented Intelligence, the National Engineering Research Center for Visual Information and Applications, and the Institute of Artificial Intelligence and Robotics, Xi'an Jiaotong University, Xi'an 710049, China (e-mail: xuezhiyuan@stu.xjtu.edu.cn; benyang@xjtu.edu.cn; xuetaozh@xjtu.edu.cn; wfx@xjtu.edu.cn)}

\thanks{Zhiping Lin is with the School of Electrical and Electronic Engineering, Nanyang Technological University, Singapore 639798, Singapore. (e-mail: ezplin@ntu.edu.sg)}
 }

\markboth{Journal of \LaTeX\ Class Files,~Vol.~xx, No.~xx, August~20xx}%
{Shell \MakeLowercase{\textit{et al.}}: A Sample Article Using IEEEtran.cls for IEEE Journals}

\maketitle

\begin{abstract}
In light of their capability to capture structural information while reducing computing complexity, anchor graph-based multi-view clustering (AGMC) methods have attracted considerable attention in large-scale clustering problems. Nevertheless, existing AGMC methods still face the following two issues: 1) They directly embedded diverse anchor graphs into a consensus anchor graph (CAG), and hence ignore redundant information and numerous noises contained in these anchor graphs, leading to a decrease in clustering effectiveness; 2) They drop effectiveness and efficiency due to independent post-processing to acquire clustering indicators. To overcome the aforementioned issues, we deliver a novel one-step multi-view clustering method with adaptive low-rank anchor-graph learning (OMCAL). To construct a high-quality CAG, OMCAL provides a nuclear norm-based adaptive CAG learning model against information redundancy and noise interference. Then, to boost clustering effectiveness and efficiency substantially, we incorporate category indicator acquisition and CAG learning into a unified framework. Numerous studies conducted on ordinary and large-scale datasets indicate that OMCAL outperforms existing state-of-the-art methods in terms of clustering effectiveness and efficiency.

\end{abstract}

\begin{IEEEkeywords}
Multi-view clustering, low-rank graph, anchor graph, matrix decomposition.
\end{IEEEkeywords}

\section{Introduction}

\IEEEPARstart{T}{he} rapid development of multimedia technology and information technology has led to the explosive growth of multi-view data. In the realm of multi-view clustering \cite{chao2021survey,fang2023comprehensive}, graph-based multi-view clustering (GMC) \cite{zhan2017graph,li2021consensus,wang2022towards,fu2022unified} methods have garnered significant attention for their capacity to capture rich structural information within the given data. Typically, GMC captures this structural information of the data by evaluating the similarity among all samples. For instance, \cite{nie2017multi}  directly assesses the similarity between each pair of samples across all views, \cite{zhu2018one} focuses on finding a low-dimensional space that reduces noise to enhance similarity measurement between pairs of samples, and \cite{long2023multi} employs subspace theory to construct a graph structure that reflects the similarity between each pair of samples, etc. Although GMC demonstrates excellent clustering performance, its quadratic computational complexity, stemming from the similarity calculation among all samples, limits its scalability to large-scale data. To overcome this limitation, anchor graph-based multi-view clustering (AGMC) \cite{guo2019anchors,yang2022efficientrobust,yang2024discrete,zhao2024anchor} methods have rapidly advanced, owing to their capability to capture structural information while mitigating computational complexity.

AGMC first identifies representative anchors from multi-view data and captures structural information by measuring the similarity among samples and these anchors. Since AGMC selects a significantly smaller number of anchors compared to the total sample size, building the anchor graph entails only linear computational complexity. However, existing AGMC methods face two main issues: 1) The prevailing methods directly embed the anchor graphs of different views into a consensus anchor graph (CAG), ignoring redundant information
and numerous noises contained in these anchor graphs; 2) The acquisition of clustering indices in current AGMC methods necessitates a separate post-processing step. This separation between consensus anchor graph construction and clustering index acquisition introduces additional computational complexity and leads to information loss.

To fuse information from multi-view data, early AGMC methods assign varying weights to each view, treating the final consensus anchor graph as the weighted sum of these views, which partially distinguishes their contributions \cite{yang2020fast,wang2021semi,yang2024fast}. However, multi-view data offers descriptions of the same samples from different perspectives, with each view containing a significant amount of unique but redundant information. For example, when describing the identity of the same person using fingerprint and iris views, each view contains information unique to its respective modality. Therefore, constructing anchor graphs independently for each view and directly embedding them leads to information redundancy. In recent years, numerous studies have focused on obtaining high-quality consensus anchor graphs. Some methods enhance the embedding space of the consensus anchor graph by exploring more representative anchor vector spaces \cite{wang2022align, qin2023edmc, yu2023multi,li2023incomplete}. While these approaches have mitigated the impact of redundant information in each view to some extent, they do not directly eliminate redundancy within the consensus anchor graph itself. Other methods construct tensors for anchor graphs and employ low-rank learning strategies to enforce similarity for the same sample across all views \cite{dai2023tensorized,shu2022self,ji2023anchor}. Although these methods achieve highly consistent representations of the same sample across different views, they do not guarantee the separability of all samples in the consensus anchor graph, which hinders the improvement of clustering accuracy.

To analyze the anchor graph for clustering indicators, some methods simulate the full-sample graph by learning the embedding representation of the anchor graph \cite{li2015large,kang2020large,liu2024learn}. These approaches typically utilize singular value decomposition (SVD) to derive the anchor graph embedding representation, followed by post-processing steps like k-means to derive the final clustering indicators. These methods significantly lower the complexity of the analysis process by directly analyzing the anchor graph, with most exhibiting linear computational complexity. However, they require post-processing steps to analyze the embedding representation of the anchor graph for getting clustering indicators, which not only increases computational complexity but also leads to additional information loss. To address these issues, some methods integrate the post-processing step with anchor graph embedding learning into a single process \cite{shi2021fast,yang2022fast,yu2023sample}. The aforementioned methods approximate graph embedding by learning cluster indicator matrices. Although these methods eliminate the need for post-processing steps, they still require the optimization of additional variables, such as direct mapping or graph embedding rotation after obtaining the graph embedding, which may compromise computational efficiency.

To address the aforementioned challenges, we propose a one-step multi-view clustering method with adaptive low-rank anchor graph learning (OMCAL) method, which integrates low-rank consensus anchor graph learning and clustering indicators acquisition within a unified framework. In our framework, the consensus anchor graph learning is guided not only by the low-rank constraint but also by the clustering target. This joint optimization ensures that the learned consensus graph is both low-rank and well-aligned with the clustering structure. Specifically, OMCAL lowers the impact of redundant information and noise on the consensus structure by performing low-rank learning on the consensus anchor graph derived from all views. Simultaneously, OMCAL performs orthogonal matrix decomposition on the consensus anchor graph, constraining one of the factor matrices as non-negative maxtrix to serve as the clustering indicator matrix. In this single step, OMCAL can efficiently obtain clustering indicator without losing information. We execute a series of experiments involving various multi-view datasets,  including large-scale data, demonstrating the performance and efficiency advantages of our method. The primary contributions of this paper are outlined below:

\begin{enumerate}
\itemsep=0pt
\item 
We proposed a low-rank consensus anchor graph learning strategy to fully explore the underlying correlations among samples while taking efficiency into account.
\item 
We integrated anchor graph learning and clustering indicator acquisition within a unified framework to extract a clearer underlying clustering structure and prevent the loss of clustering information and efficiency.
\item
Leveraging the orthogonality of matrices and the properties of the matrix trace, we designed a fast optimization method for OMCAL.
\item
We conducted extensive experiments on multi-view datasets, demonstrating the effectiveness and efficiency of the proposed OMCAL. 
\end{enumerate}

The remainder of this paper is structured as follows: Section II reviews related multi-view clustering methods based on anchor graph; Section III presents preliminary knowledge and notation explanations; Section IV outlines the method proposed in this paper; Section V details the optimization method we designed for OMCAL; and Section VI presents a series of experiments and discussions. Finally, Section VII concludes this paper.

\section{Related Works}
This paper focuses primarily on two aspects: 1) obtaining a high-quality consensus anchor graph to uncover the underlying structure of the data; and 2) efficiently analyzing anchor graphs to achieve effective clustering indicator. Therefore, this section will present a succinct overview of these two areas of work.

In the context of obtaining a high-quality consensus anchor graph and exploring the underlying structure of multi-view data, some AGMC methods focus on identifying more representative anchor vector spaces \cite{wang2022align, qin2023edmc, yu2023multi,li2023incomplete}. For example, \cite{wang2022align} proposes a novel anchor graph fusion framework to align anchors of different views. \cite{qin2023edmc} leverages the cluster representation space to generate physically meaningful and informative anchors. \cite{yu2023multi} generates unified anchors for all views to improve clustering efficiency and effect. \cite{li2023incomplete} employs dual-attention and dual contrastive learning to capture view-specific anchors and model the relationship between samples and anchors.  However, information from different feature spaces contains not only the underlying clustering information but also redundant details unique to each view. Simply identifying high-quality anchor vector spaces does not directly eliminate the redundant information present in the consensus anchor graph. 
Some methods achieve a high-quality consensus graph by minimizing differences among the representations of each anchor graph \cite{dai2023tensorized,shu2022self,ji2023anchor}. For instance, \cite{dai2023tensorized} combines anchor graphs from various views into a tensor and employs its nuclear norm to constrain the approximate representation of anchor mappings across these views. \cite{shu2022self} employs the tensor Schatten p-norm to treat the weights belongs to all views distinctly, ultimately obtaining an approximate representation of the anchor graph across these views. \cite{ji2023anchor} designs an enhanced tensor rank to capture high-order similarities among views. However, most of these methods focus solely on the similarity of the same sample across views and do not address the underlying clustering structure within the consensus anchor graph. In addition, over-emphasizing cross-view consistency can lead to the loss of structural information \cite{lu2024decoupled}. As a result, they cannot guarantee the separability of all samples in the consensus anchor graph, which hinders improvements in clustering accuracy.

In the context of analyzing anchor graphs to obtain clustering indicator, many recent AGMC methods utilize the embedding representation of anchor graphs to simulate full-sample graphs. These methods exhibit linear complexity in obtaining anchor graph embedding representations, significantly expanding the applicability of AGMC to large-scale data \cite{li2015large,kang2020large,liu2024learn}. For instance, \cite{li2015large} employs local manifold fusion to integrate multi-view anchor graphs and applies k-means clustering on the singular value vectors of the consensus anchor graph to achieve clustering indicator. Similarly, \cite{kang2020large} uses a self-representation strategy to learn the anchor graph and utilizes the left singular values of the consensus anchor graph to simulate the consensus full-sample graph. Nevertheless, these methods still involve post-processing steps like k-means to derive clustering indicators, leading to information loss and increased computational demands. To address these issues, some methods integrate post-processing with anchor graph embedding learning in the same step 
 \cite{shi2021fast,yang2022fast,yu2023sample}. For instance, \cite{shi2021fast} combines anchor graph embedding with graph rotation within a unified framework to directly generate clustering metrics. \cite{yang2022fast} streamlines the learning of rotation matrices by directly mapping anchor graph embedding to indicator matrices, facilitating the extraction of final clustering metrics. Although these methods eliminate the need for post-processing steps like k-means, they still require the optimization of additional variables after obtaining graph embedding, which can compromise computational efficiency.

\section{Notations and Preliminaries}
In this section, we will introduce the notations and prerequisite knowledge used in this paper.

\subsection{Notations}
In terms of matrix and vector representation, all matrices are represented by bold uppercase letters. For example, $\mathbf{A}\in\mathbb{R}^{a\times b}$ represents a matrix with $a$ rows and $b$ columns. The $(i,j)$-th element of matrix $\mathbf{A}$ is written as $\mathbf{A}_{ij}$. The $i$-th row and $j$-th column of matrix $\mathbf{A}$ are written as $\mathbf{A}_{i \cdot}$ and $\mathbf{A}_{\cdot j}$ respectively. All vectors are written in lowercase bold letters. For instance, $\boldsymbol{a} \in \mathbb{R}^{n \times 1}$ is a column vector. The $i$-th element of vector $\boldsymbol{a}$ is denoted as $\boldsymbol{a}_{i}$. In terms of the meaning of mathematical symbols, $\rm{Tr}(\mathbf{A})$ represents the trace of matrix $\mathbf{A}$. $\left \| \mathbf{A} \right \| _{\ast } $ and $ \left \| \mathbf{A} \right \|_{\rm{F}}$ represent the nuclear norm and Frobenius norm of the matrix $\mathbf{A}$ respectively. $\mathbf{A}^{\top}$ represents the transpose of matrix $\mathbf{A}$. $\mathbf{A}^{-1}$ represents the inverse matrix of $\mathbf{A}$. $\left \| \boldsymbol{a} \right \|_{2} $ represent the 2-norm of vector $\boldsymbol{a}$.

\subsection{Preliminaries}

\subsubsection{Non-negative matrix factorization} 
Assume that a set of single-view data is $\mathbf{X}\in\mathbb{R}^{n\times d}$, where $n$ represents the number of samples and $d$ represents the number of dimensions. Non-negative Matrix Factorization (NMF) \cite{lee1999learning} intends to factor original data into the product of base matrix and weight matrix that are both non-negative. For a clustering problem, if we wish to classify $\mathbf{X}$ into $c$ classes, we can constrain base matrix as $\mathbf{G} \in \mathbb{R}^{d\times c}$ and weight matrix as $\mathbf{F} \in \mathbb{R}^{n \times c}$. The problem of NMF can be written as:
\begin{equation}
\label{eq:NMF}
 \min_{\mathbf{F}\geq 0,\mathbf{G}\geq 0}{\left\Vert \mathbf{X}-\mathbf{FG^{\top}}\right\Vert^{2}_{\rm{F}}}
\end{equation}

\subsubsection{Anchor Graph Construction}
Constructing an anchor graph involves two key processes: selecting the anchors and constructing the anchor graphs. Anchor selection methods include k-means and random sampling. In comparison, anchors obtained through k-means are more representative of the overall distribution \cite{li2015large}. Anchor graph aims to reveal the similarity among the anchors and the samples. To further thin out the anchor graph and obtain the principal component, the $k$-NN strategy has been extensively adopted for anchor graph construction recently.  Following the strategy in \cite{nie2016constrained}, a normalized anchor graph can be written as follows:
\begin{equation}
 \label{eq:construct anchor graph}
 \mathbf{S}_{ij}=\left\{\begin{matrix}
 \frac{\Phi \left(\boldsymbol{x}_{i},\boldsymbol{c}_{k+1} \right)-\Phi \left(\boldsymbol{x}_{i},\boldsymbol{c}_{j}\right)}{k\Phi \left(\boldsymbol{x}_{i},\boldsymbol{c}_{k+1} \right)- {\textstyle \sum_{h=1}^{k}}\Phi \left(\boldsymbol{x}_{i},\boldsymbol{c}_{h} \right) } & j\le k \\ \\
 0&j> k
 \end{matrix}\right.
\end{equation}
where $k$ represents the number of anchors nearest to the sample $\boldsymbol{x}_{i}$, $\boldsymbol{c}_{j}$ represent the $j$-th anchor, and $\Phi(\cdot)$ indicates the similarity measurement method.

\subsubsection{Nuclear norm} 
A low-rank matrix indicates that a matrix can be expressed by a linear combination of a limited number of principal component vectors \cite{liu2012robust,jing2023prototype}. This implies that the principal components can be further obtained by low-rank learning of the matrix. Given a matrix $\mathbf{A}$, the low-rank learning of $\mathbf{A}$ can be expressed as follows:
\begin{equation}
\label{eq:rank}
 \min_{\mathbf{A} } rank(\mathbf{A})
\end{equation}

However, optimizing the Eq. (\ref{eq:rank}) is difficult due to its non-convexity.  Consequently, numerous prior studies have transformed rank constraints into nuclear norm constraints to facilitate convex optimization \cite{recht2010guaranteed,chen2018harnessing}. The nuclear norm serves as an approximation of the rank function and is defined as follows:
\begin{equation}
 \label{eq:nulear_norm_define}
 \left \| \mathbf{A} \right \| _{\ast } = \sum_{i=1}^{h} \sigma_{i}
\end{equation}
where $h$ denotes the total number of singular value of $\mathbf{A}$, and $\sigma_{i}$ represents the $i$-th largest among them.

Due to the non-negativity of singular values, we have $\left \| \mathbf{A} \right \| _{\ast }\ge 0$. Additionally, to further filter out noise and redundant information from $\mathbf{A}$, based on the definition of Eq. (\ref{eq:nulear_norm_define}), the problem in Eq. (\ref{eq:rank}) can be relaxed as follows:
\begin{equation}
 \min_{\mathbf{S} } \Phi\left ( \mathbf{S},\mathbf{A} \right ) + \left \| \mathbf{S} \right \| _{\ast }
\end{equation}
where $\Phi\left ( \mathbf{S},\mathbf{A} \right )$ represents the distance between $\mathbf{S}$ and $\mathbf{A}$.

\section{Methodology}
In this section, we presents the proposed OMCAL model. Assume that the original multi-view anchor graphs are denoted as $\mathcal{S}=\left[\mathbf{S}^{(1)},\mathbf{S}^{(2)},\dots,\mathbf{S}^{(V)} \right]$, $\mathbf{S}^{(i)}\in\mathbb{R}^{n\times m} $. Where samples will be divided into $c$ clusters. Here, $V$ represents the number of views. We will introduce the two components of the OMCAL structure: learning a consensus graph and getting clustering indicator in the same step.

\subsection{Learning a consensus graph}
Since the number of anchors of all anchor graphs is $m$, the dimensions of all anchor graphs are equal. As previously discussed, effectively handling the information across different views is crucial for multi-view clustering. This paper uses the local manifold fusion method to assign distinct weights to each anchor graph. We aim to combine the structural information from each view to achieve a consensus joint anchor graph. The consensus anchor graph can be expressed as follows:
\begin{equation}
 \mathbf{Z}=\sum_{v=1}^{V} \boldsymbol{\alpha}_{v} \mathbf{S}^{(v)}
 \label{eq:direct add anchor graph}
\end{equation}
where $\mathbf{Z}$ represents the consensus graph obtained from all views, which contain both consensus information and redundant information from each view. $\boldsymbol{\alpha}_{v}$ and $\mathbf{S}^{(v)}$ represent the weight and anchor graph of the $v$-th view respectively.

Eq. (\ref{eq:direct add anchor graph}) can retain most of the information from all anchor graphs. However, Multi-view data provides different perspectives of the same samples, with each view containing both consensus and redundant information. For instance, an article might be reported in English, French, and Spanish. While these reports convey the same core information, such as news themes or sentiment, they also include redundant details specific to each language, such as variations in lexical and grammatical structures. Similarly, when a sample is represented by both text and images, the text view includes consensus information alongside redundant details unique to textual representation. Therefore, the consensus anchor graph $\mathbf{Z}$ obtained by Eq. (\ref{eq:direct add anchor graph}) contains a significant amount of redundant information. 

For clustering task, information consisting solely of principal components tends to achieve better clustering performance, as it ensures separability among samples. Therefore, due to the presence of redundant information, directly using Z for clustering analysis does not adequately ensure separability among samples, which hinders the improvement of clustering accuracy. To address this challenge, we aim for the consensus graph shared across all views to be low-rank. Therefore, we employ the nuclear norm of the consensus graph as a constraint in its learning process. By adaptively adjusting the weight of each view, we learn a low-rank consensus graph:
\begin{equation}
\label{eq:low-rank learing}
\begin{split}
 \min\limits_{\mathbf{Z},\alpha}&\left \|\mathbf{Z}- \sum_{v}^{V}  \boldsymbol{\alpha}_{v} \mathbf{S}^{(v)} \right \|_{F}^{2} + \beta \left \| \mathbf{Z} \right \| _{\ast } \\
 s.t. &\boldsymbol{1}^{\top} \boldsymbol{\alpha} =1,\boldsymbol{\alpha} \ge 0
\end{split}
\end{equation}
where $\beta$ is a trade-off parameter.

\subsection{Getting clustering indicator in the same step}
The majority of current anchor graph-based multi-view clustering methods learn the embedding representation of the anchor graph to approximate either the full-sample graph:
\begin{equation}
\label{eq:svd for anchor graph}
    svd\left(\mathbf{S}\right)=\mathbf{G}_{X}\mathbf{\Sigma}\mathbf{G}_{U}
\end{equation}
where $svd(\bullet)$ is the singular value decomposition operator. $\mathbf{G}_{X}$ is a new embedding representation for all samples.  These methods exhibit linear complexity in obtaining anchor graph embedding representations, significantly expanding the applicability of AGMC to large-scale data. However, these methods require additional post-processing steps after obtaining graph embedding, which leads to a loss of information and efficiency. To circumvent these post-processing steps, some methods integrate the learning of the clustering indicator and the learning of the embedding representation in a single step. Nevertheless, they still need to learn extra variables after getting graph embedding, which is inefficiency.

Through constructing anchor graphs and learning a consensus graph, we obtain a low-dimensional consensus anchor graph that contains common principal component information among views. Due to the above characteristics of consensus anchor graph, performing matrix decomposition on the consensus anchor graph is effective. Therefore, instead of applying SVD to the consensus anchor graph to obtain the graph embedding, we directly perform matrix factorization on the consensus anchor graph. To avoid post-processing steps, inspired by NMF \cite{lee1999learning}, We impose a non-negtive constraint on one of the factor matrices in the decomposition, denoted as $\mathbf{F}$, which serves as a soft indicator matrix:
\begin{equation}
     \begin{split}
 &\min\limits_{\mathbf{F},\mathbf{G}}  \left \| \mathbf{Z}-\mathbf{FG}^{\top} \right \|_{\rm{F}}^{2}
\\
 s.&t. \mathbf{G}^{\top}\mathbf{G}=\mathbf{I}, \mathbf{F}\ge 0 
\end{split}
\end{equation}
 where $\mathbf{I}$ is a identity matrix. For the theoretical analysis, $\mathbf{G} \in \mathbb{R}^{m\times c}$ represents a set of basis vectors with dimension $m$, in which $c$ is the number of clusters. $\mathbf{F} \in \mathbb{R}^{n\times c}$ serves as the soft indicator matrix, with $\mathbf{F}_{ij}$ representing the projection of the $i$-th sample onto the $j$-th basis vector. Finally, the condition $\mathbf{G}^{\top}\mathbf{G}=\mathbf{I}$ ensures the orthogonality of the basis vectors, facilitating better distinction of the original data. The physical meaning of matrix $\mathbf{F}$ is clear: $\mathbf{G}_{\cdot j}$ actually represents the $j$-th category. Since $\mathbf{F} \ge 0$, $\mathbf{F}_{i \cdot}$ actually represents the probability of the $i$-th sample on all $c$ categories. Therefore, we only need to take the largest element of $\mathbf{F}_{i \cdot}$ to get the category to which F belongs. Ultimately, we can get clustering indicator from the matrix $\mathbf{F}$. The objective function is formulated as:
\begin{equation}
\label{eq:objective function}
 \begin{split}
 \min\limits_{\mathbf{Z},\boldsymbol{\alpha},\mathbf{F},\mathbf{G}} &\left \| \mathbf{Z}  - \sum_{v}^{V} \boldsymbol{\alpha}_{v} \mathbf{S}^{(v)} \right \|_{\rm{F}}^{2} + \beta \left \| \mathbf{Z} \right \| _{\ast } + \gamma \left \| \mathbf{Z}-\mathbf{FG}^{\top} \right \|_{\rm{F}}^{2}
\\
 s.t. &\mathbf{G}^{\top}\mathbf{G}=\mathbf{I},\textbf{1}^{\top}\boldsymbol{\alpha} =1,\boldsymbol{\alpha}\ge 0, \mathbf{F}\ge 0 
\end{split}
\end{equation}
where $\gamma$ is a traded-off parameter similar to $\beta$. According to Eq. (\ref{eq:objective function}) , we can complete the learning and analysis of the consensus anchor graph in one step to directly obtain the clustering indicator. More importantly, under the nuclear norm constraint, Eq. (\ref{eq:objective function}) integrates the low-rank learning of the consensus anchor graph and the clustering indicator acquisition into a single unified framework. In this framework, the consensus graph learning is guided not only by the low-rank constraint but also by the clustering target. This joint optimization ensures that the learned consensus graph is both low-rank and well-aligned with the clustering structure, leading to enhanced clustering performance. This coupling of low-rank learning and clustering analysis distinguishes our method from existing approaches, which typically treat these steps independently.

\section{Optimization and Analysis}
In this section, we propose a fast optimization method for OMCAL. Specifically, we transform the problem of optimizing Eq. (\ref{eq:objective function}) into four steps. At each step, we optimize a single parameter while keeping the others fixed until the objective function converges.

\subsection{Initialization}
Before optimizing Eq. (\ref{eq:objective function}), we need to initialize three variables: $\boldsymbol{\alpha}$, $\mathbf{F}$, $\mathbf{G}$ and $\mathbf{Z}$. For the weight of each view, we initialize them evenly $\boldsymbol{\alpha}_{v} = 1/V$. For consensus graph $\mathbf{Z}$, we initialize it as $\mathbf{Z} = \sum_{v}^{V}\boldsymbol{\alpha}_{v}\mathbf{S}^{(v)}$. For other variables, we randomly initialize $\mathbf{F}$ to be a non-negative matrix $\mathbf{F} \ge 0$ and  randomly initialize $\mathbf{G}$ to be a orthogonal matrix $\mathbf{G}^{\top}\mathbf{G}=\mathbf{I}$.

\subsection{Solving $\mathbf{F}$}
When we fix other parameters $\mathbf{G}$, $\mathbf{Z}$ and $\boldsymbol{\alpha}$, and only update $\mathbf{F}$, we can rewritten Eq. (\ref{eq:objective function}) as follows:
\begin{equation}
\label{eq:transe obj for F}
 \min_{\mathbf{F}\ge 0}\left \| \mathbf{Z}-\mathbf{F}\mathbf{G}^{\top} \right \| _{\rm{F}}^{2}
\end{equation}

Solving Eq. (\ref{eq:transe obj for F}) is equivalent to finding $\mathbf{F}$ such that $\mathbf{Z}=\mathbf{FG}^{\top}$. Civen that $\mathbf{G}^{\top}\mathbf{G}=\mathbf{I}$, we can reformulate the problem $\mathbf{Z}=\mathbf{FG}^{\top}$ as $\mathbf{ZG} = \mathbf{F}$. Therefore, solving Eq. (\ref{eq:transe obj for F}) is equivalent to solving the following problems:
\begin{equation}
 \min_{\mathbf{F}\ge 0}\left \| \mathbf{ZG} - \mathbf{F} \right \| _{\rm{F}}^{2}
\end{equation}

Since the elements of $\mathbf{F}$ are non-negative, and the elements of $\mathbf{ZG}$ may contain negative values, any element $\mathbf{F}_{ij}$ in $\mathbf{F}$ can be solved as follows:
\begin{equation}
\label{eq:solve F}
 \mathbf{F}_{ij}=\max \left ( \left ( \mathbf{ZG} \right ) _{ij},0  \right ) 
\end{equation}

\subsection{Solving $\mathbf{G}$}
When we fix the other three parameters and update only $\mathbf{G}$, Eq. (\ref{eq:objective function}) can be rewritten as follows:
\begin{equation}
\label{eq:transe obj for G}
 \min_{\mathbf{G}^{\top}\mathbf{G}=\mathbf{I}}\left \| \mathbf{Z}-\mathbf{FG}^{\top} \right \| _{\rm{F}}^{2}
\end{equation}

According to the properties of Frobenius norm $\left \| \mathbf{A} \right \|_{\rm{F}} = \sqrt{\rm{Tr}(\mathbf{A}^{T}\mathbf{A}  )} $, we can obtain $\mathbf{G}$ through the resolution of the following problem:
\begin{equation}
 \min _{\mathbf{G}^{\top}\mathbf{G}=\mathbf{I}} \rm{Tr}\left ( \mathbf{ZZ}^{\top}-2\mathbf{G}^{\top}\mathbf{Z}^{\top}\mathbf{F}+\mathbf{F}^{\top}\mathbf{F} \right )
\end{equation}

Since other parameters are fixed except $\mathbf{G}$, we only solve the following problem:
\begin{equation}
 \label{eq:solve G trace}
 \max _{\mathbf{G}^{\top}\mathbf{G}=\mathbf{I}} {\rm Tr} \left ( \mathbf{G}^{\top}\mathbf{W} \right )
\end{equation}
where $\mathbf{W}=\mathbf{Z}^{\top}\mathbf{F}$.

\begin{theo}
\label{Theorem 1}
The solving of $\max_{\mathbf{M}^{\top}\mathbf{M}=\mathbf{I}} {\rm Tr} \left ( \mathbf{M}^{\top}\mathbf{N} \right ) $ is $\mathbf{M} = \mathbf{\check{A}} \left [\mathbf{I},\mathbf{0}\right ] \mathbf{\check{C}} $ ,where $\left [ \mathbf{\check{A}},\mathbf{\check{B}} , \mathbf{\check{C}}^{\top} \right ]=svd\left (  \mathbf{N}\right )$.
\end{theo}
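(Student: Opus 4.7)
The plan is to reduce the constrained trace maximization to a one-dimensional problem by diagonalizing $\mathbf{N}$ via its SVD and exploiting the orthogonality of the Stiefel-manifold variable. First I would write $\mathbf{N}=\check{\mathbf{A}}\check{\mathbf{B}}\check{\mathbf{C}}^{\top}$ and substitute directly into the objective, then use the cyclic property of the trace to move the orthogonal factors next to $\mathbf{M}$:
\begin{equation*}
{\rm Tr}(\mathbf{M}^{\top}\mathbf{N}) = {\rm Tr}(\mathbf{M}^{\top}\check{\mathbf{A}}\check{\mathbf{B}}\check{\mathbf{C}}^{\top}) = {\rm Tr}\bigl((\check{\mathbf{A}}^{\top}\mathbf{M}\check{\mathbf{C}})^{\top}\check{\mathbf{B}}\bigr).
\end{equation*}

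Next I would introduce the auxiliary variable $\mathbf{P}=\check{\mathbf{A}}^{\top}\mathbf{M}\check{\mathbf{C}}$ and verify that it inherits the Stiefel constraint: using $\check{\mathbf{A}}^{\top}\check{\mathbf{A}}=\mathbf{I}$ and $\check{\mathbf{C}}^{\top}\check{\mathbf{C}}=\mathbf{I}$, a short calculation gives $\mathbf{P}^{\top}\mathbf{P}=\check{\mathbf{C}}^{\top}\mathbf{M}^{\top}\mathbf{M}\check{\mathbf{C}}=\mathbf{I}$. The original maximization is therefore equivalent to maximizing ${\rm Tr}(\mathbf{P}^{\top}\check{\mathbf{B}})$ over $\mathbf{P}$ with orthonormal columns, and by the one-to-one correspondence $\mathbf{M}=\check{\mathbf{A}}\mathbf{P}\check{\mathbf{C}}^{\top}$ the optimum in $\mathbf{M}$ is recovered from the optimum in $\mathbf{P}$.

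Since $\check{\mathbf{B}}$ is the (rectangular) diagonal matrix of singular values $\sigma_{1}\ge\sigma_{2}\ge\cdots\ge 0$, expanding the trace gives ${\rm Tr}(\mathbf{P}^{\top}\check{\mathbf{B}})=\sum_{i}\sigma_{i}\mathbf{P}_{ii}$. The orthonormality of the columns of $\mathbf{P}$ forces every column to have unit Euclidean norm, so $|\mathbf{P}_{ii}|\le 1$, and consequently
\begin{equation*}
{\rm Tr}(\mathbf{P}^{\top}\check{\mathbf{B}}) \le \sum_{i}\sigma_{i}.
\end{equation*}
Equality requires $\mathbf{P}_{ii}=1$ for every $i$ with $\sigma_{i}>0$; combined with the unit-norm column condition, this pins down the $i$-th column of $\mathbf{P}$ to be the standard basis vector $\mathbf{e}_{i}$, i.e.\ $\mathbf{P}^{\star}=[\mathbf{I},\mathbf{0}]$ (with $\mathbf{0}$ absorbing the extra rows/columns needed to match the rectangular shape of $\check{\mathbf{B}}$). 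Substituting back yields $\mathbf{M}^{\star}=\check{\mathbf{A}}[\mathbf{I},\mathbf{0}]\check{\mathbf{C}}$, which is the claimed closed form.

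The routine parts are the trace manipulations and verifying that $\mathbf{P}$ remains on the Stiefel manifold. The only genuinely delicate step is the tight upper bound: one must argue simultaneously that $|\mathbf{P}_{ii}|\le 1$ (from unit column norms) and that the equality case is attainable within the feasible set, which is where the rectangular padding $[\mathbf{I},\mathbf{0}]$ enters and dimensional bookkeeping between $\check{\mathbf{A}}$, $\check{\mathbf{B}}$, and $\check{\mathbf{C}}$ must be handled cleanly. Once that is nailed down, the feasibility of $\mathbf{M}^{\star}=\check{\mathbf{A}}[\mathbf{I},\mathbf{0}]\check{\mathbf{C}}$ follows immediately by checking $(\mathbf{M}^{\star})^{\top}\mathbf{M}^{\star}=\mathbf{I}$ from the orthogonality of $\check{\mathbf{A}}$ and $\check{\mathbf{C}}$.
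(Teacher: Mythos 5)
Your proposal is correct and follows essentially the same route as the paper's proof: substitute the SVD of $\mathbf{N}$, use the cyclic property of the trace to isolate an auxiliary matrix with orthonormal columns (your $\mathbf{P}$ is the transpose of the paper's $\mathcal{A}=\check{\mathbf{C}}^{\top}\mathbf{M}^{\top}\check{\mathbf{A}}$), bound the diagonal entries by $1$, and attain equality at $[\mathbf{I},\mathbf{0}]$. If anything, you are slightly more careful than the paper in explicitly verifying the Stiefel constraint on $\mathbf{P}$ and in flagging the rectangular padding needed for the equality case.
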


\begin{proof}
If $\left [ \mathbf{\check{A}},\mathbf{\check{B}} , \mathbf{\check{C}}^{\top} \right ]=svd\left ( \mathbf{N} \right )$, then the following equation holds:
\begin{equation}
\begin{aligned} 
 {\rm Tr}\left( \mathbf{M}^{\top}\mathbf{N} \right )
 &= {\rm Tr} \left( \mathbf{M}^{\top}\mathbf{\check{A}}\mathbf{\check{B}}\mathbf{\check{C}}^{\top} \right ) \\
 &= {\rm Tr} \left( \mathbf{\check{B}}\mathbf{\check{C}}^{\top}\mathbf{M}^{\top}\mathbf{\check{A}} \right )\\
\end{aligned}
\label{eq:for_prove_decomposition}
\end{equation}

According to the properties of singular value decomposition, it is straightforward to prove that $\mathcal{A} = \mathbf{\check{C}}^{\top}\mathbf{M}^{\top}\mathbf{\check{A}}$ is an orthogonal matrix. Therefore, Eq. (\ref{eq:for_prove_decomposition}) can be expressed as:
\begin{equation}
 {\rm Tr}\left ( \mathbf{M}^{\top}\mathbf{N} \right ) = \sum_{ii}\mathbf{\check{B}}_{ii}\mathcal{A}_{ii}
\end{equation}

Since $\mathbf{\mathcal{A}}$ is an orthogonal matrix, $\left | \mathbf{\mathcal{A}}_{ii} \right | \le 1 $ holds. Additionally, Since $\mathbf{\check{B}}_{ii}$ is the singular value of $\mathbf{N}$, $\mathbf{\check{B}}_{ii} \ge 0$ holds. So the following inequality holds:
\begin{equation}
 {\rm Tr}\left ( \mathbf{M}^{\top}\mathbf{N} \right ) = \sum_{ii}\mathbf{\check{B}}_{ii}\mathcal{A}_{ii} \le \sum_{ii}\mathbf{\check{B}}_{ii}
\end{equation}

When $\mathbf{\mathcal{A}}_{ii}=1$, $\rm{Tr}\left ( \mathbf{M}^{\top}\mathbf{N} \right ) $ reach its maximum value. Therefore, we set $\mathbf{\mathcal{A}} = \left [\mathbf{I},\mathbf{0} \right ]$, Consequently, we obtain $\mathbf{\check{C}}^{\top}\mathbf{M}^{\top}\mathbf{\check{A}} = \left[ \mathbf{I},\mathbf{0} \right]$. Then, we have $\mathbf{M} = \mathbf{\check{A}} \left[ \mathbf{I}, \mathbf{0} \right] \mathbf{\check{C}}$.

\end{proof}

According to the \textbf{Theorem \ref{Theorem 1}}, we can solve Eq. (\ref{eq:solve G trace}) as follows:
\begin{equation}
 \label{eq:solve G}
 \mathbf{G} = \mathbf{\check{\Delta }} \left [ \mathbf{I},\mathbf{0} \right ] \mathbf{\check{\Lambda }}
\end{equation}
where $ \left [\mathbf{\check{\Delta }},\mathbf{\check{\Theta }} ,\mathbf{\check{\Lambda }} \right ] =svd\left ( \mathbf{Z}^{\top}\mathbf{F} \right )$.

\subsection{Solving $\mathbf{Z}$}
When we fix the other three parameters except for $\mathbf{Z}$, Eq. (\ref{eq:objective function}) can be reformulated as follows:
\begin{equation}
\label{eq:solve Z trace }
\begin{split}
 \min_{\mathbf{Z}}{\rm Tr}&\left [ \left(1+\gamma \right)\mathbf{ZZ}^{\top}-
 2\mathbf{Z}^{\top}\left ( \sum_{v}\boldsymbol{\alpha}_{v}\mathbf{S}^{(v)}+\gamma \mathbf{FG}^{\top} \right ) \right ]\\+&\beta \left \| \mathbf{Z} \right \|_{\ast } 
\end{split}
\end{equation}

Since we only update $\mathbf{Z}$ in this step, adding a constant term $\mathbf{M}$ does not affect the optimal solution of $\mathbf{Z}$. In accordance with the characteristics of the Frobenius norm, solving Eq. (\ref{eq:solve Z trace }) is equivalent to solving the following problem:
\begin{equation}
 \label{eq:for IT}
 \min_{\mathbf{Z}}\frac{1}{2}\left \| \mathbf{Z}-\mathbf{M} \right \|_{\rm{F}}^{2} + \frac{\beta }{2(1+\gamma)}\left \| \mathbf{Z} \right \|_{\ast } 
\end{equation}
where $\mathbf{M} = \frac{ {\textstyle \sum_{v} \boldsymbol{\alpha}_{v}\mathbf{S}_{v}+\gamma \mathbf{FG}^{\top}}}{1+\gamma } $.

Based on the the singular value thresholding algorithm \cite{cai2010singular}, the solution to Eq. (\ref{eq:for IT}) can be obtained as follows:
\begin{equation}
\label{eq:solve Z}
 \mathbf{Z}=\mathbf{P}\mathcal{D}_{\frac{\beta}{2(1+\gamma)}}\left [ \boldsymbol{\Sigma } \right ] \mathbf{Q}^{\top}
\end{equation}
where $\mathbf{P} \boldsymbol{\Sigma} \mathbf{Q}^{\top} = svd(\mathbf{M})$, $\mathcal{D}_{\frac{\beta}{2(1+\gamma)}}\left [ \boldsymbol{\Sigma } \right ]$ represents the application of a soft-thresholding rule to each element $\boldsymbol{\Sigma }_{ii}$ in $\boldsymbol{\Sigma }$. Specifically, $\boldsymbol{\Sigma }_{ii} =\left( \boldsymbol{\Sigma }_{ii}-\frac{\beta}{2(1+\gamma)} \right )_{+} $, where $t_{+}$ denotes the positive part of $t$, defined as $t_{+} = \max \left ( t,0 \right ) $.

\subsection{Solving $\boldsymbol{\alpha}$}
Fixing the other three parameters except for $\boldsymbol{\alpha}$, we can rewrite Eq. (\ref{eq:objective function}) as follows:
\begin{equation}
\label{eq:trans for alpha}
\begin{split}
 \min_{\boldsymbol{\alpha} }\left \| \mathbf{Z}-\sum_{v}\boldsymbol{\alpha}_{v} \mathbf{S}^{(v)} \right \|_{\rm{F}}^{2} \\
 s.t. \boldsymbol{1}^{\top}\boldsymbol{\alpha}=1,\boldsymbol{\alpha} \ge 0
\end{split}
\end{equation}

Referring to the characteristics of the Frobenius norm, We can express Frobenius norm as a matrix trace. After removing the variables that are irrelevant to $\boldsymbol{\alpha}$, Solving Eq. (\ref{eq:trans for alpha}) is equivalent to solving the following problem:
\begin{equation}
\label{eq:alpha trace}
\begin{split}
  \min_{\boldsymbol{\alpha}} {\rm Tr}& \left [ - 2\sum_{v} \left ( \boldsymbol{\alpha}_{v} \mathbf{S}^{(v)} \right ) ^{\top } \mathbf{Z}+\left ( \sum_{v} \boldsymbol{\alpha}_{v} \mathbf{S}^{(v)} \right )^{\top} \sum_{v} \boldsymbol{\alpha}_{v} \mathbf{S}^{(v)} \right ] \\
& s.t. \boldsymbol{1}^{\top}\boldsymbol{\alpha}=1,\boldsymbol{\alpha} \ge 0
\end{split}
\end{equation}

The trace of a matrix represents the sum of its diagonal elements of the matrix. If $\bar{d} $ is the number of rows in matrix $\mathbf{A}$ and the number of columns in matrix $\mathbf{B}$, we can get ${\rm Tr} \left ( \mathbf{AB} \right ) = \sum_{i=1}^{\bar{d} } \mathbf{A}_{ i\cdot}\mathbf{B}_{\cdot j}$. Furthermore, Eq. (\ref{eq:alpha trace}) can be rewritten as follows:
\begin{equation}
 \min_{\boldsymbol{1}^{\top}\boldsymbol{\alpha} =1,\boldsymbol{\alpha} \ge 0 } \boldsymbol{\alpha} ^{\top} \mathbf{\widetilde{S} \widetilde{S}^{\top}} \boldsymbol{\alpha} -2vec(\mathbf{Z})\mathbf{\widetilde{S}}^{\top}\boldsymbol{\alpha} 
\end{equation} 
where $vec\left ( \cdot\right ) $ represents the concatenation of vectors from the matrix into a new vector, $\mathbf{\widetilde{S}} = \left [ \mathbf{\widetilde{S}}_{11}^{\top},\dots,\mathbf{\widetilde{S}}_{nm}^{\top} \right ]^{\top} $, and $\mathbf{\widetilde{S}}_{ij} = \left [ \mathbf{S}_{ij}^{(1)},\dots,\mathbf{S}_{ij}^{(V)} \right ]^{\top} $. Let $\mathbb{M}=\mathbf{\widetilde{S}\widetilde{S}}^{\top}$ and $\boldsymbol{m} =2\mathbf{\widetilde{S}}vec(\mathbf{Z})^{\top}$, the problem of solving $\boldsymbol{\alpha}$ is equal to the following problem:
\begin{equation}
\label{eq:model for quadratic optimization }
 \min_{\boldsymbol{1}^{\top}\boldsymbol{\alpha}=1,\boldsymbol{\alpha} \ge 0} \boldsymbol{\alpha} ^{\top}\mathbb{M}\boldsymbol{\alpha} -\boldsymbol{\alpha}^{\top} \boldsymbol{m}
\end{equation}

The above problem is a quadratic programming problem, which can be solved by augmented lagrangian method \cite{nie2019multiview}.

Finally, the clustering indicator can be directly obtained from the indicator matrix $\mathbf{F}$. Algorithm \ref{algor for OMCAL} summarizes the optimization process of Eq. (\ref{eq:objective function}).
\begin{algorithm}
\caption{\textbf{Algorithm} of OMCAL}
\begin{algorithmic}[1]
 \REQUIRE Original anchor graphs $\mathcal{S}=\left[\mathbf{S}^{(1)},\mathbf{S}^{(2)},\dots,\mathbf{S}^{(V)} \right]$, the number of clusters $c$, trade-off parameters $\beta$ and $\gamma$
 \ENSURE The indicator matrix $\mathbf{F}$
 \STATE \textbf{Initialize} $\boldsymbol{\alpha}_{v}=1/V$, $\mathbf{Z} = \sum_{v}^{V}\boldsymbol{\alpha}_{v}\mathbf{S}^{(v)}$,  the orthogonal matrix $\mathbf{G}$ and the non-negative matrix $\mathbf{F}$
 \WHILE{not converge}
 \STATE Update $\mathbf{F}$ according to Eq. (\ref{eq:solve F})
 \STATE Update $\mathbf{G}$ according to Eq. (\ref{eq:solve G})
 \STATE Update $\mathbf{Z}$ according to Eq. (\ref{eq:solve Z})
 \STATE Update $\boldsymbol{\alpha}$ according to Eq. (\ref{eq:model for quadratic optimization })
 \ENDWHILE
\end{algorithmic}
\label{algor for OMCAL}
\end{algorithm}

\subsection{Computational complexity analysis}
The computational complexity of our proposed method mainly consists of two parts: constructing the anchor graphs and the optimization processing. The process of constructing anchor graphs includes obtaining anchors and computing similarity among anchors and samples, with the computational complexity of $\mathcal{O}\left( t_{1}nmd \right)$ and $\mathcal{O}\left( nmd \right)$ respectively. Here, $t_{1}$ is the number of iterations of k-means, $d = \sum _{i=1}^{V} d_{i}$, and $d_{i}$ is the feature dimension of the $i$-th view. The optimization process consist of four steps: solving $\mathbf{F}$, solving $\mathbf{G}$, solving $\mathbf{Z}$ and solving $\boldsymbol{\alpha}$, with computational complexities of $\mathcal{O} \left( nc \right)$, $\mathcal{O} \left( mc^{2} \right)$, $\mathcal{O} \left( m^{2}n \right)$ and $\mathcal{O} \left( V \right)$, respectively. Therefore, the overall computational complexity of the optimization process is $\mathcal{O} \left[ \left( nc + mc^{2} + m^{2}n + V \right) \times t_{2} \right]$, where $t_{2}$ is the number of iterations of optimization. Based on the above analysis, the computational complexity of our proposed method is $\mathcal{O}\left [ t_{1}nmd + nmd + t_{2} \times \left ( nc + mc^{2} + m^{2}n + V \right ) \right ]$. When faced with large-scale datasets, $m$, $c$, $d$, $t_{1}$, and $t_{2}$ are much smaller than $n$. Therefore, our method can be considered linear, which is efficient when dealing with large-scale data.

\section{Experiments}
In this section, we present the comparison between our methods and state-of-the-art multi-view clustering methods. Specifically, we ran all methods on seven real-world datasets including normal and large-scale datasets, and utilized six metrics to evaluate the performance of different clustering methods. These metrics include clustering accuracy (ACC), normalized mutual information (NMI), cluster purity (Purity), adjusted rand index (ARI), F-score, and Precision \cite{zhan2018multiview}. For each of the six metrics, elevated values suggest superior clustering performance. Furthermore, we conducted experiments with various parameter settings to analyze the sensitivity of our method to these parameters. Finally, we visualized the consensus anchor, and conducted extensive experiments to evaluate the effectiveness of the proposed consensus anchor graph learning strategy. All experiments were conducted on an Intel(R) Xeon(R) Gold 5318H CPU @ 2.50GHz 128GB workstation, using Matlab R2022a.

\subsection{Datasets and baselines}
We conducted an extensive evaluation of our approach on seven real-world datasets, including Coil \cite{zhan2017graph}, WiKi \cite{rasiwasia2010new}, USPS \cite{lv2021pseudo}, Reuters \cite{apte1994automated}, XMedia \cite{peng2017overview}, NoisyMNIST \cite{sun2024robust}, Cifar10 \cite{krizhevsky2009learning}, Cifar100 \cite{krizhevsky2009learning}, and Mnist \cite{lecun1998gradient}. The last six datasets are large-scale, each containing more than 10,000 samples. Table \ref{Tab:detail of datasets} provides a summary of these datasets.

\begin{table}[htbp]
\begin{adjustbox}{center}
\begin{threeparttable}
\caption{\label{Tab:detail of datasets}The Detail Of Different Multi-view Datasets}
 \centering
 \makeatletter 
 \begin{tabular*}{8.5cm}{@{\extracolsep{\fill}}lccc}
 \toprule[1pt]
 Dataset &  Samples& Clusters & $d^{(1)}$, $d^{(2)}$, $\cdots$, $d^{(V)}$                    \\
 \midrule[0.2pt]
Coil                      & 1440                                   & 20                     & 30,19,30                \\
Wiki                      & 2866                                 & 10                     & 128, 10                       \\
USPS                        & 9298                                  & 10                    & 30, 9, 30                       \\
Reuters                     & 18758                                 & 6                     & 21531, 24892, 34251, 15506, 11547 \\
NoisyMNIST                     & 30000                                 & 10                     & 784,784 \\
XMedia                  & 40000                                 & 200                     & 4096,300 \\
Cifar10                       & 50000                                & 10                    & 512, 2048, 1024                      \\
Cifar100                        & 50000                                 & 100                    & 512, 2048, 1024             \\
Mnist                   & 60000                                 & 10                    & 342, 1024, 64       \\
\bottomrule[1pt]
 \end{tabular*}
\end{threeparttable}
\end{adjustbox}
\end{table}

To demonstrate the performance of our proposed method, we evaluate it alongside several state-of-the-art multi-view clustering methods, which include AMGL \cite{nie2016parameter}, MLAN \cite{nie2017multi}, MCGC \cite{zhan2018multiview}, SMC \cite{liu2022scalable}, FSMSC \cite{chen2023fast}, SLMVGC \cite{tan2023sample}, UOMvSC \cite{tang2022unified}, MVSC-HFD \cite{ou2024anchor}. In all experiments, we utilized the open-source code or original code of the aforementioned methods for parameter tuning. To the best of our ability, we select the best performance of each method.

Table \ref{Tab:computational complexity compare} summarizes the computational complexity of all methods, where $t_{i}$ represents the number of iterations, $n$ represents the number of samples, $d = \sum _{i=1}^{V} d_{i}$ represents the sum of the dimensions of all views, $c$ represents the number of clusters, $V$ represents the nunber of views, $m$ represensts the number of anchors, $l_{i}$ represents the dimension of transfer matrix in MVSC-HFD. As shown in Table \ref{Tab:computational complexity compare}, AMGL, MLAN, MCGC, SLMVGC and UOMvSC are clustering methods based on the full-sample graphs. Their graph construction and analysis steps require computational complexities of $\mathcal{O}\left( n^{2}d \right)$ and $\mathcal{O}\left( n^{2}c \right)$ respectively, resulting in quadratic computational complexity. On the other hand, although SMC is a clustering method based on the anchor graph, it utilizes the full-sample graph to perform graph filtering operations on the original data. Consequently, constructing the full-sample graph still requires $\mathcal{O}\left( n^{2}d \right)$ computational complexity, leading to its quadratic computational complexity as well. In contrast, FSMSC, MVSC-HFD, and OMCAL do not introduce the full-sample graph in the clustering process, but instead construct and analyze the anchor graph, resulting in linear computational complexity. Furthermore, although FSMSC and MVSC-HFD also exhibit linear computational complexity, their actual complexity is scaled by different coefficients due to the presence of additional variables to be optimized. This makes the theoretical computational complexity of our method more favorable.

\begin{table}[ht]
\begin{adjustbox}{center}
\begin{threeparttable}
\caption{\label{Tab:computational complexity compare}The Computational Complexity of All Methods}
 \centering
 \makeatletter 
 \begin{tabular*}{8.5cm}{@{\extracolsep{\fill}}cc}
 \toprule[1pt]
 Method & Time complexity \\
 \midrule[0.2pt]
 AMGL & $\mathcal{O}\left ( n^{2}d+t_{1}n^{2}c+t_{1}n^{2}cV+t_{2}nc^{2} \right )$   \\
 MLAN & $\mathcal{O}\left ( n^{2}d+t_{1}n^{2}d+t_{1}n^{2}+t_{1}n^{2}c \right )$   \\
 MCGC & $\mathcal{O}\left ( n^{2}d+t_{1}n^{2}cV+t_{1}t_{2}n^{2}c \right )  $  \\
 SMC & $\mathcal{O}\left ( n^{2}d+t_{1}n^{2}d+t_{2}nmd+nmd+t_{3}nm^{2} \right )  $ \\
 FSMSC & $\mathcal{O}\left ( t_{1}ncd+t_{1}nc^{2}+t_{1}nc^{3}+t_{2}nc^{2} \right )  $ \\
 SLMVGC & $\mathcal{O}\left ( n^{2}d+t_{1}nV+t_{1}n^{2}+t_1{nm^{2}+t_{1}n^{3}} +t_{2}n^{2}c \right )  $  \\
 UOMvSC & $\mathcal{O}\left ( n^{2}d+n^{2}cV+t_{1}V+t_{1}nc^{2}+t_{1}nc \right )  $ \\
 \multirow{2}{*}{MVSC-HFD} & $\mathcal{O}\left ( t_{1}Vdlc+t_{1}Vc\sum_{i=0}^{\delta-2}l_{i}l_{i+1}  \right. $ \\ & $ \left. +t_{1}ndc+t_{1}mdc+t_{1}nmc+t_{2}nmc \right )  $  \\
 OMCAL & $\mathcal{O} \left ( t_{1}nmd+t_{2}nc+t_{2}mc^{2}+t_{2}nm^{2}+t_{2}V \right )$ \\
\bottomrule[1pt]
 \end{tabular*}
\end{threeparttable}
\end{adjustbox}
\end{table}

\subsection{Clustering results}

 In this subsection, we present the outcomes of eight clustering methods across seven datasets. As outlined earlier, we ensure fairness by tuning parameters for each method on all datasets. All methods were executed five times with the best parameter settings. Table \ref{Tab:detail of parameters} shows the parameter settings adopted by our algorithm on various datasets. Regarding efficiency, Table \ref{Tab:running time} displays the average time taken by all methods on all datasets. For effectiveness, Table \ref{Tab:result} presents the average value and standard deviation for each method on all datasets. Due to the large scale and high dimensions of certain datasets, some methods encountered memory overflow or extended running times. Specifically, we record `OM' when a method experiences memory overflow and `---' when the running time of a method exceeds 6 hours.

 \begin{table}[ht]
\begin{adjustbox}{center}
\begin{threeparttable}
\caption{\label{Tab:detail of parameters}The Detail Of OMCAL's Parameters for all Datasets}
 \centering
 \begin{tabular*}{7.5cm}{@{\extracolsep{\fill}}lccc}
 \toprule[1pt]
 Dataset &  $m$ &  $\beta$ & $\gamma$  \\
 \midrule[0.2pt]
Coil   & 35 & 0.3 & 0.01 \\
Wiki & 30 & 0.1 & 0.1 \\
USPS & 40 & 0.3 & 0.1 \\
Reuters & 15 & 0.8 & 0.0001 \\
NoisyMNIST & 100 & 0.2 & 1 \\
XMedia & 40 & 0.1 & 1 \\
Cifar10 & 35 & 1 & 0.001 \\
Cifar100 & 150 & 0.4 & 0.1 \\
MNIST & 30 & 0.2 & 0.1 \\
\bottomrule[1pt]
 \end{tabular*}
\end{threeparttable}
\end{adjustbox}
\end{table}

\begin{table*}[!h]
 \begin{adjustbox}{center}
\begin{threeparttable}
\caption{\label{Tab:result}Clustering Results Comparison on All Datasets}
 \centering
 \setlength{\tabcolsep}{1.2mm}
 \makeatletter
 \begin{tabular*}{19cm}{llccccccccc}
 \toprule[1pt]
Metrics & Methods & Coil & Wiki & USPS & Reuters & NoisyMNIST & XMedia  & Cifar10 & Cifar100 & Mnist \\
\midrule[0.2pt]
\multirow{9}{*}{ACC} & AMGL & 0.625(0.000) & 0.159(0.000) & 0.706(0.000) & --- & ---  & ---   & OM & OM & OM \\
 & MLAN & \textbf{1.000(0.000)} & 0.158(0.010) & 0.168(0.002) & 0.323(0.000) & 0.597 (0.000)  & OM   & OM & OM & OM \\ 
 & MCGC & \textbf{1.000(0.000)} & 0.366(0.000) & 0.779(0.000) & 0.335(0.000) & ---  & OM   & OM & OM & OM  \\
 & SMC & 0.738(0.069) & 0.171(0.000) & 0.622(0.010) & 0.432(0.000) & 0.311 (0.016)  & 0.764 (0.030)   & OM & OM & OM \\
 & FSMSC & 0.863(0.000) & 0.209(0.000) & 0.695(0.000) & \underline{0.453(0.000)} & 0.474 (0.000)  & \underline{0.867 (0.000)}  & \underline{0.854(0.000)} & \underline{0.859(0.000)} & 0.710(0.000) \\
 & SL\_MVGC & 0.338(0.021) & 0.552(0.000) & 0.572(0.000) & 0.269(0.000) & OM  & OM   & OM & OM & OM \\
 & UOMvSC &\textbf{1.000(0.000}) & \underline{0.581(0.000)} & 0.670(0.000) & OM & \underline{0.640 (0.000)}  & OM   & OM & OM & OM \\
 & MVSC-HFD & 0.813(0.000) & 0.564(0.000) & \underline{0.811(0.000)} & OM & 0.507 (0.000)  & 0.536 (0.000)   & 0.880(0.000) & 0.626(0.000) & \underline{0.885(0.000)} \\
 & \textbf{OMCAL} & \textbf{1.000(0.000)} & \textbf{0.594(0.015)} & \textbf{0.882(0.005)} & \textbf{0.463(0.051)} & \textbf{0.737 (0.021)}  & \textbf{0.888 (0.030)}   & \textbf{0.970(0.004)} & \textbf{1.000(0.000)} & \textbf{0.989(0.001)} \\
\midrule[0.2pt]
\multirow{9}{*}{NMI} & AMGL & 0.674(0.000) & 0.022(0.000) & 0.694(0.000) & --- & ---  & ---   & OM & OM & OM \\
 & MLAN & \textbf{1.000(0.000)} & 0.015(0.003) & 0.004(0.002) & 0.060(0.000) & 0.672 (0.000)  & OM   & OM & OM & OM \\ 
 & MCGC & \textbf{1.000(0.000)} & 0.344(0.000) & 0.753(0.000) & 0.084(0.000) & ---  & OM   & OM & OM & OM \\
 & SMC & 0.882(0.035) & 0.040(0.000) & 0.656(0.006) & 0.213(0.000) & 0.280 (0.006)  & 0.871 (0.008)   & OM & OM & OM \\
 & FSMSC & 0.954(0.000) & 0.069(0.000) & 0.638(0.000) & \underline{0.265(0.000)} & 0.440 (0.000)  & \textbf{0.939 (0.000)}   & \underline{0.903(0.000} & \underline{0.970(0.000)} & \underline{0.849(0.000)} \\
 & SL\_MVGC & 0.534(0.018) & 0.485(0.000) & 0.691(0.000) & 0.005(0.000) & OM  & OM   & OM & OM & OM \\
 & UOMvSC & \textbf{1.000(0.000)} & \underline{0.548(0.000)} & \textbf{0.776(0.000)} & OM & \textbf{0.770 (0.000)}  & OM   & OM & OM & OM \\
 & MVSC-HFD & 0.931(0.000) & 0.535(0.000) & 0.727(0.000) & OM & 0.364 (0.000)  & 0.713 (0.000)   & 0.891(0.000) & 0.822(0.000) & \underline{0.888(0.000)} \\
 & \textbf{OMCAL} & \textbf{1.000(0.000)} & \textbf{0.562(0.004)} & \underline{0.767(0.007)} & \textbf{0.277(0.055)} & \underline{0.687 (0.025)}  & \underline{0.938 (0.007)}   & \textbf{0.927(0.006) } & \textbf{1.000(0.000)} & \textbf{0.968(0.003)} \\
\midrule[0.2pt]
\multirow{9}{*}{Purity} & AMGL & 0.630(0.000) & 0.176(0.000) & 0.708(0.000) & --- & ---  & ---   & OM & OM & OM \\
 & MLAN & \textbf{1.000(0.000)} & 0.171(0.000) & 0.170(0.002) & 0.324(0.000) & 0.605 (0.000)  & OM   & OM & OM & OM \\ 
 & MCGC  & \textbf{1.000(0.000)} & 0.450(0.000) & 0.779(0.000) & 0.343(0.000) & ---  & OM   & OM & OM & OM \\
 & SMC & 0.918(0.024) & 0.243(0.000) & 0.741(0.015) & \textbf{0.637(0.000)} & 0.412 (0.005)  & 0.840 (0.014)   & OM & OM & OM \\
 & FSMSC & 0.900(0.000) & 0.237(0.000) & 0.758(0.000) & 0.516(0.000) & 0.520 (0.000)  & \underline{0.895 (0.000) }  & \underline{0.888(0.000)} & \underline{0.900(0.000)} & 0.796(0.000) \\ 
 & SL\_MVGC & 0.483(0.014) & 0.616(0.000) & 0.714(0.000) & 0.273(0.000) & OM  & OM   & OM & OM & OM \\
 & UOMvSC & \textbf{1.000(0.000)} & \underline{0.633(0.000)} & 0.790(0.000) & OM & \underline{0.707 (0.000)}  & OM   & OM & OM & OM \\
 & MVSC-HFD & 0.850(0.000) & 0.609(0.000) & \underline{0.818(0.000)} & OM & 0.510 (0.000)  & 0.539 (0.000)   & 0.880(0.000) & 0.635(0.000) & \underline{0.886(0.000)} \\
 & \textbf{OMCAL} & \textbf{1.000(0.000)} & \textbf{0.633(0.004)} & \textbf{0.882(0.005)} & \underline{0.546(0.041)} & \textbf{0.780 (0.021) } & \textbf{0.898 (0.023)}   & \textbf{0.970(0.004)} & \textbf{1.000(0.000)} & \textbf{0.989(0.001)} \\
\midrule[0.2pt]
\multirow{9}{*}{ARI} & AMGL & 0.549(0.000) & 0.001(0.000) & 0.663(0.000) & --- & ---  & ---   & OM & OM & OM \\
 & MLAN & \textbf{1.000(0.000)} & -0.002(0.002) & -0.000(0.000) & 0.001(0.000) & 0.540 (0.000)  & OM   & OM & OM & OM \\ 
 & MCGC & \textbf{1.000(0.000)} & 0.185(0.000) & 0.746(0.000) & 0.007(0.000) & ---  & OM   & OM & OM & OM \\
 & SMC & 0.640(0.109) & 0.017(0.000) & 0.515(0.014) & 0.142(0.000) & 0.145 (0.003)  & 0.684 (0.027)   & OM & OM & OM \\ 
 & FSMSC & 0.877(0.000) & 0.037(0.000) & 0.574(0.000) & \textbf{0.217(0.000)} & 0.322 (0.000)  & \underline{0.846 (0.000)}   & 0.847(0.000) & \underline{0.884(0.000)} & 0.743(0.000) \\ 
 & SL\_MVGC & 0.143(0.038) & 0.395(0.000) & 0.528(0.000) & 0.000(0.000) & OM  & OM   & OM & OM & OM \\
 & UOMvSC & \textbf{1.000(0.000)} & \underline{0.455(0.000)} & 0.646(0.000) & OM & \underline{0.598 (0.000)}  & OM   & OM & OM & OM \\
 & MVSC-HFD & 0.834(0.000) & 0.434(0.000) & \underline{0.733(0.000)} & OM & 0.280 (0.000)  & 0.373 (0.000)   & \underline{0.856(0.000)} & 0.361(0.000) & \underline{0.850(0.000)} \\
 & \textbf{OMCAL} & \textbf{1.000(0.000)} & \textbf{0.478(0.013)} & \textbf{0.789(0.009)} & \underline{0.201(0.048)} & \textbf{0.632 (0.034)}  & \textbf{0.857 (0.024)}   & \textbf{0.936(0.007)} & \textbf{0.999(0.000)} & \textbf{0.977(0.003)} \\
\midrule[0.2pt]
\multirow{9}{*}{F-score} & AMGL & 0.577(0.000) & 0.160(0.000) & 0.706(0.000) & --- & ---  & ---   & OM & OM & OM \\ 
 & MLAN & \textbf{1.000(0.000)} & 0.186(0.007) & 0.194(0.000) & 0.344(0.000) & 0.603 (0.000)  & OM   & OM & OM & OM \\ 
 & MCGC & \textbf{1.000(0.000)} & 0.305(0.000) & 0.777(0.000) & 0.346(0.000) & ---  & OM   & OM & OM & OM \\
 & SMC & 0.663(0.101) & 0.128(0.000) & 0.569(0.013) & 0.375(0.000) & 0.246 (0.003)  & 0.686 (0.027)   & OM & OM & OM \\
 & FSMSC & 0.884(0.000) & 0.137(0.000) & 0.619(0.000) & \textbf{0.393(0.000)} & 0.392 (0.000)  & \underline{0.847 (0.000)}   & 0.863(0.000) & \underline{0.885(0.000)} & 0.772(0.000) \\ 
 & SL\_MVGC & 0.216(0.031) & 0.460(0.000) & 0.583(0.000) & 0.352(0.000) & OM  & OM   & OM & OM & OM \\
 & UOMvSC & \textbf{1.000(0.000)} & \underline{0.516(0.000)} & 0.686(0.000) & OM & \underline{0.646 (0.000)}  & OM   & OM & OM & OM \\
 & MVSC-HFD & 0.843(0.000) & 0.496(0.000) & \underline{0.762(0.000)} & OM & 0.366 (0.000)  & 0.377 (0.000)   & \underline{0.872(0.000)} & 0.372(0.000) & \underline{0.867(0.000)} \\
 & \textbf{OMCAL} & \textbf{1.000(0.000)} & \textbf{0.535(0.012)} & \textbf{0.811(0.008)} & \underline{0.378(0.041)} & \textbf{0.670 (0.031)}  & \textbf{0.857 (0.024) }  & \textbf{0.943(0.006)} & \textbf{0.999(0.000)} & \textbf{0.979(0.003)} \\ 
\midrule[0.2pt]
\multirow{9}{*}{Precision} & AMGL & 0.472(0.000) & 0.108(0.000) & 0.596(0.000) & --- & ---  & ---   & OM & OM & OM \\
 & MLAN & \textbf{1.000(0.000)} & 0.107(0.001) & 0.107(0.000) & 0.214(0.000) & 0.442 (0.000)  & OM   & OM & OM & OM \\ 
 & MCGC & \textbf{1.000(0.000)} & 0.223(0.000) & 0.684(0.000) & 0.217(0.000) & ---  & OM   & OM & OM & OM \\
 & SMC & 0.910(0.023) & 0.133(0.000) & 0.595(0.027) & \textbf{0.513(0.000)} & 0.297 (0.005)  & 0.756 (0.015)   & OM & OM & OM \\
 & FSMSC & 0.824(0.000) & 0.142(0.000) & 0.635(0.000) & 0.375(0.000) & 0.383 (0.000)  & \underline{0.806 (0.000)}   & \underline{0.806(0.000)} & \underline{0.826(0.000)} & 0.688(0.000) \\
 & SL\_MVGC & 0.127(0.026) & 0.462(0.000) & 0.530(0.000) & 0.214(0.000) & OM  & OM   & OM & OM & OM \\
 & UOMvSC & \textbf{1.000(0.000)} & \underline{0.501(0.000)} & 0.659(0.000) & OM & \underline{0.539 (0.000)}  & OM   & OM & OM & OM \\
 & MVSC-HFD & 0.757(0.000) & 0.483(0.000) & \underline{0.741(0.000)} & OM & 0.307 (0.000)  & 0.271 (0.000)   & 0.798(0.000) & 0.230(0.000) & \underline{0.788(0.000)} \\ 
 & \textbf{OMCAL} & \textbf{1.000(0.000)} & \textbf{0.531(0.014)} & \textbf{0.809(0.009)} & \underline{0.365(0.032)} & \textbf{0.651 (0.031)}  & \textbf{0.827 (0.034)}   & \textbf{0.942(0.006)} & \textbf{0.999(0.000)} & \textbf{0.979(0.003)} \\
\bottomrule[1pt]
\end{tabular*}
\begin{tablenotes}
\footnotesize
\item *The best results are shown in bold, while the second-best results are indicated with underlining
 \end{tablenotes}
\end{threeparttable}
 \end{adjustbox}
\end{table*}

 In terms of clustering effectivness, as shown in Table \ref{Tab:result},  OMCAL, MLAN, MCGC, and UOMvSC can reach 1 on all metrics for the Coil dataset. Different from the other three methods based on full-sample graphs, OMCAL is the sole anchor graph-based method, indicating that our proposed anchor graph learning strategy effectively captures structural information akin to that of the full-sample graph. In the remaining datasets, most of the methods proposed in recent years have better clustering performance. Furthermore, OMCAL can achieve first place in most metrics. The high clustering effectiveness of OMCAL can be attributed to several key factors: 1) The adaptive consensus anchor graph learning strategy based on kernel norms well eliminates redundant information and noise within the consensus anchor graph. 2) The integration of anchor graph learning and clustering indicator acquisition within the same framework extracts a clearer underlying clustering structure and prevents additional information loss.

\begin{table*}[htbp]
 \begin{adjustbox}{center}
 \begin{threeparttable}
 \caption{\label{Tab:running time} Comparison of running times across all datasets (seconds)}
 \centering
 \begin{tabular*}{17cm}{@{\extracolsep{\fill}}lccccccccc}
 \toprule[1pt]
 Datasets & AMGL & MLAN & MCGC & SMC & FSMSC & SLMVGC & UOMvSC & MVSC-HFD & \textbf{OMCAL} \\
 \midrule[0.2pt]
 Coil & 14.606 & \underline{1.124} & 20.894 & 1.415 & 23.721 & 6.535 & 4.721 & 1.780 & \textbf{1.098} \\
 Wiki & 27.631 & 20.140 & 96.680 & 3.676 & 44.598 & 64.153 & 25.743 & \underline{2.793} & \textbf{1.669} \\
 USPS & 2276.147 & 266.290 & 3002.102 & 63.615 & 139.847 & 803.251 & 469.928 & \underline{12.105} & \textbf{9.653} \\
 Reuters & --- & 5766.485 & 14514.476 & \underline{847.763} & 2193.054 & 5096.926 & OM & OM & \textbf{310.035} \\
NoisyMNIST & --- & 11180.000 & --- & 559.942 & 491.410 & OM & 6086.414 & \underline{103.92}& \textbf{97.977}\\
XMedia & --- & OM & OM & 1446.273 & 6559.800 & OM & OM & \underline{1176.900} & \textbf{541.994} \\
 Cifar10 & OM & OM & OM & OM & 925.584 & OM & OM & \underline{417.552} & \textbf{59.872} \\
 Cifar100 & OM & OM & OM & OM & 1919.104 & OM & OM & \underline{624.361} & \textbf{368.033} \\
 Mnist & OM & OM & OM & OM & 930.147 & OM & OM & \underline{206.343} & \textbf{136.638} \\
 \bottomrule[1pt]
 \end{tabular*}
 \begin{tablenotes}
 \footnotesize
 \item *The best results are shown in bold, while the second-best results are indicated with underlining
 \end{tablenotes}
 \end{threeparttable} 
 \end{adjustbox}
\end{table*}

In terms of clustering efficiency, as shown in Table \ref{Tab:running time}, OMCAL significantly advantage over state-of-the-art methods based on full-sample graphs, realizing efficiency gains extending from multiple times to several hundred times. For instance, on large-scale datasets Reuters, NoisyMNIST, XMedia, Cifar10, Cifar100, and MNIST, our method yields clustering indicator within 10 minutes, while other full-sample graph-based methods either exceed memory limits or have unacceptably long running times. Moreover, compared to advanced anchor graph-based methods, our approach still demonstrates superior efficiency, achieving the fastest computation speed across all datasets. The efficiency of our algorithm is attributable to two key factors: 1) OMCAL is an anchor graph-based method that measures only the similarity among anchors and samples, thereby mitigating the high computational complexity associated with constructing full-sample graphs; 2) OMCAL can obtain clustering indicator directly, eliminating the requirement for extra post-processing steps. Additionally, the optimization process of OMCAL does not introduce extra variables, further enhancing the computational efficiency.

 In addition to effectiveness and efficiency, we also present the convergence curve of OMCAL in Fig. \ref{fig:Convergencecurve}. The horizontal axis of Fig. \ref{fig:Convergencecurve} is the number of iterations, while the vertical axis indicates the objective function value. The figure illustrates that our proposed optimization method exhibits robust convergence on all datasets. Notably, the method converges within 60 iterations for most dataset, reflecting an excellent overall convergence speed.

\begin{figure*}[h] 
\centering
\subfigure[Coil]
 {
 \includegraphics[width=1.23in,height=0.92in]{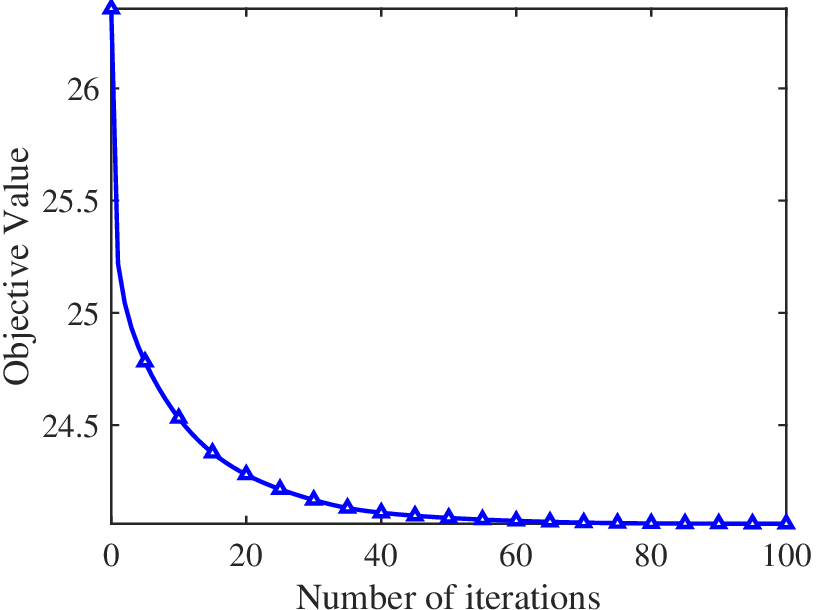}
 \label{fig:OBJCOIL20}
 }
 \subfigure[Wiki]
 {
 \includegraphics[width=1.23in,height=0.92in]{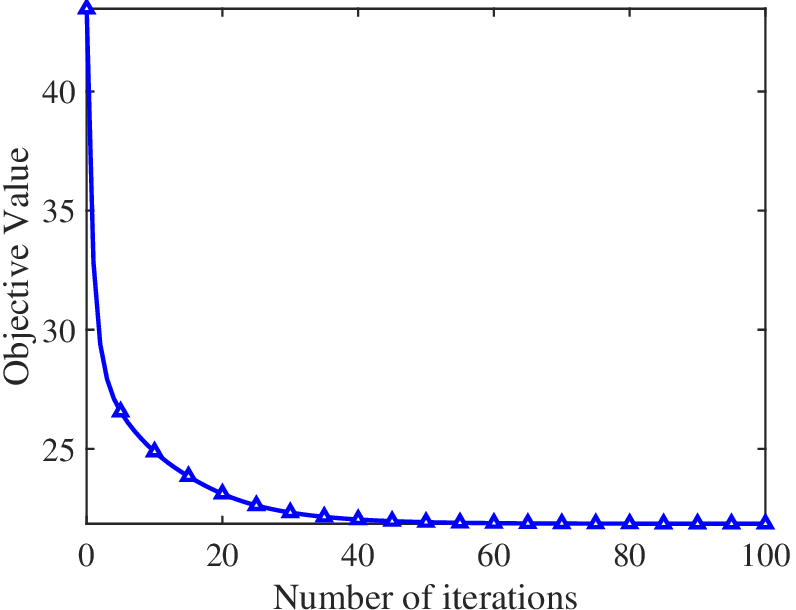}
 \label{fig:OBJWiKi}
 }
 \subfigure[USPS]
 {
 \includegraphics[width=1.23in,height=0.92in]{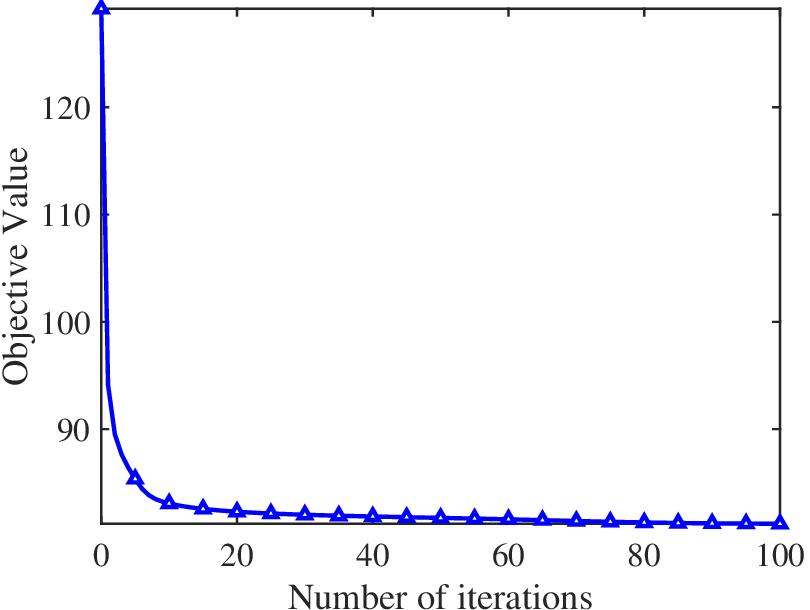}
 \label{fig:OBJUSPS}
 }
 \subfigure[Reuters]
 {
 \includegraphics[width=1.23in,height=0.92in]{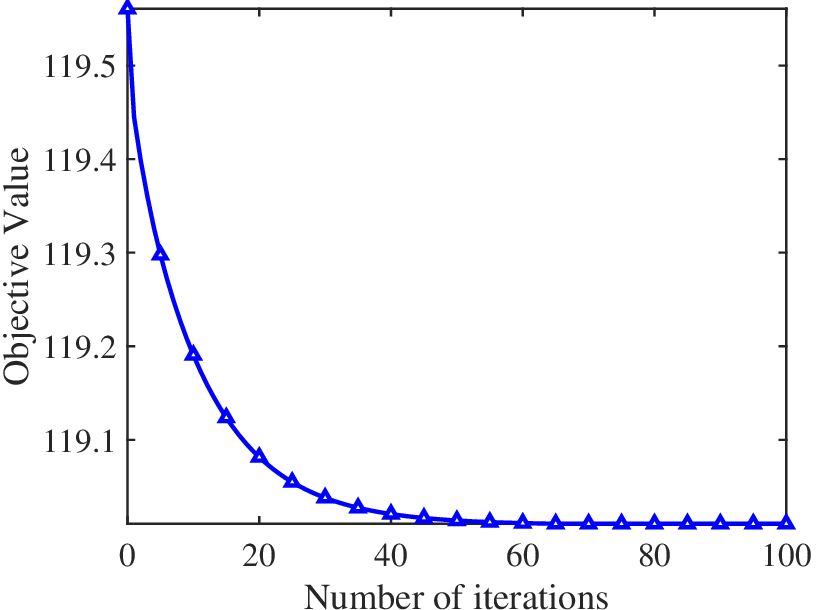}
 \label{fig:OBJReuters}
 }
  \subfigure[NoisyMNIST]
 {
 \includegraphics[width=1.23in,height=0.92in]{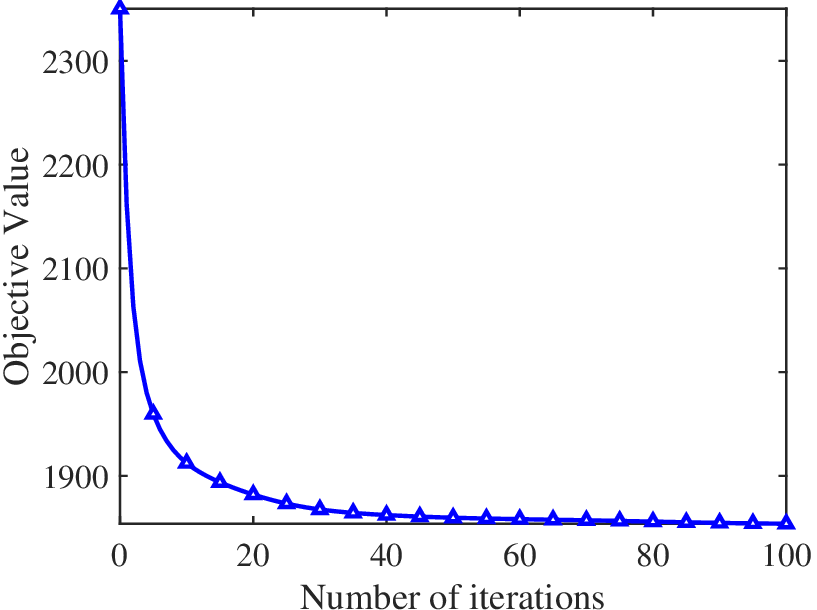}
 \label{fig:OBJNoisyMNIST}
 }
  \subfigure[XMedia]
 {
 \includegraphics[width=1.23in,height=0.92in]{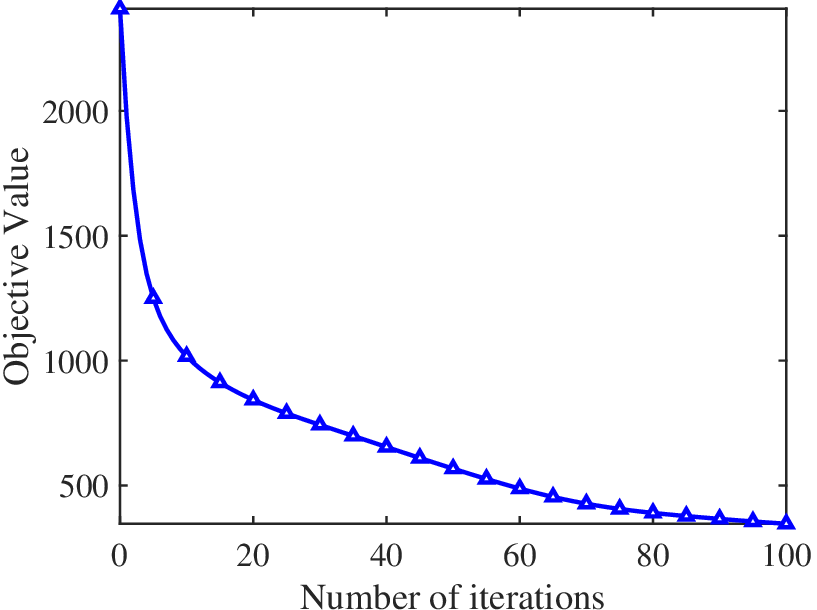}
 \label{fig:OBJXMedia}
 }
 \subfigure[Cifar10]
 {
 \includegraphics[width=1.23in,height=0.92in]{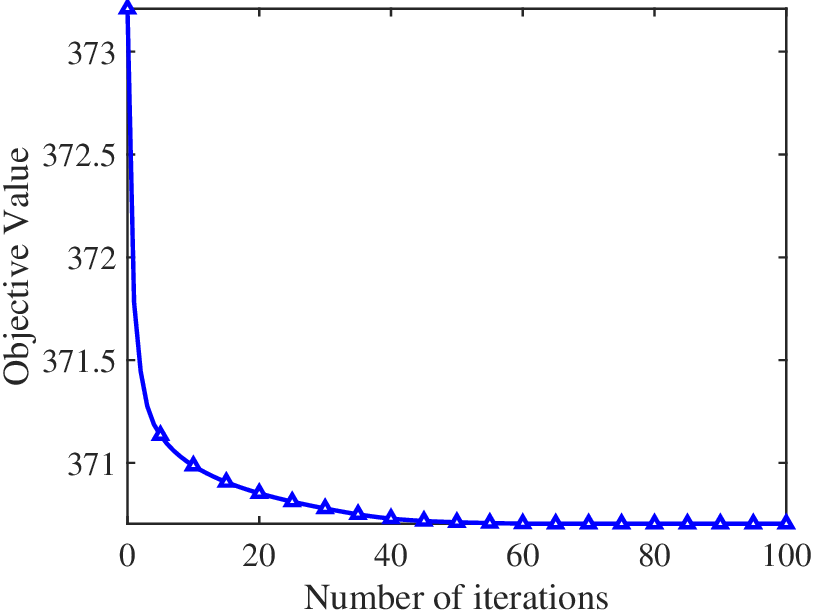}
 \label{fig:OBJcifar10}}
 \subfigure[Cifar100]
 {
 \includegraphics[width=1.23in,height=0.92in]{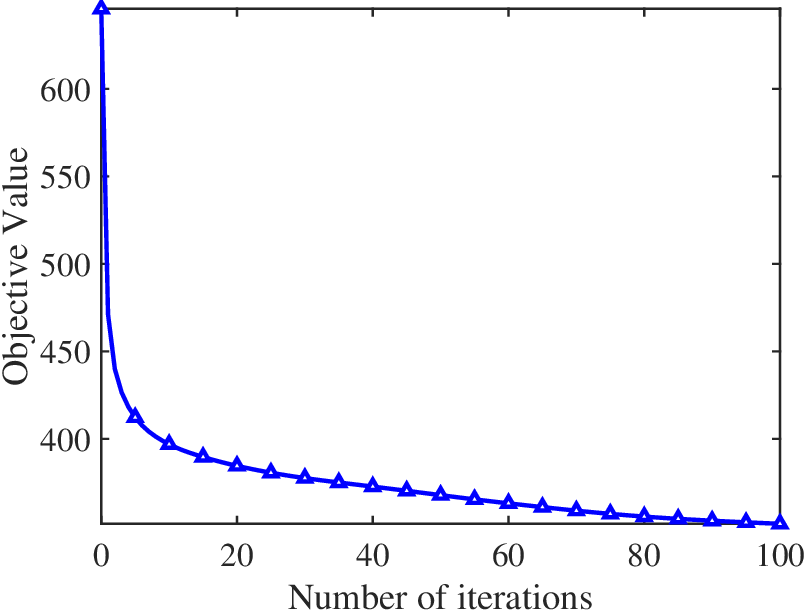}
 \label{fig:OBJMnist4}
 }
 \subfigure[Mnist]
 {
 \includegraphics[width=1.23in,height=0.92in]{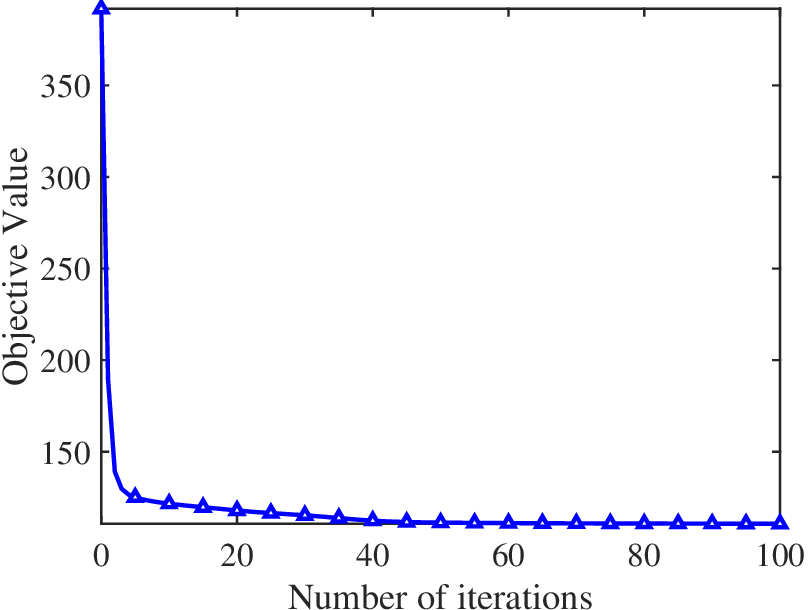}
 \label{fig:OBJMNIST}
 }
 
 \caption{Convergence curves of OMCAL on different real-world multi-view datasets: (a) Coil; (b) Wiki; (c) USPS; (d) Reuters; (e) NoisyMNIST; (f) XMedia; (g) Cifar10; (h) Cifar100; (i) Mnist.}
 \label{fig:Convergencecurve} 
 \end{figure*}

\subsection{Parameter Sensitive}

This subsection details the experiments we conducted to assess the influence of hyper-parameters. For OMCAL, there are three hyper-parameters: the trade-off parameters $\beta$, $\gamma$ and the number of anchors $m$. 

\begin{figure}[!ht] 
\centering
\subfigure[USPS]
 {
 \includegraphics[width=1.62in,height=1.35in]{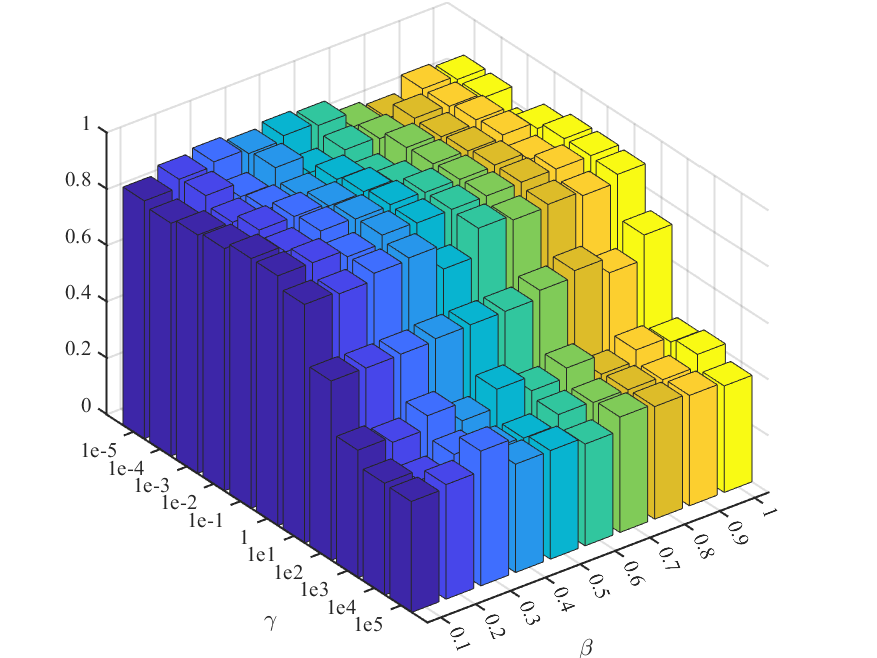}
 \label{fig:ouhe_usps__acc}
 }
 \subfigure[Cifar10]
 {
 \includegraphics[width=1.62in,height=1.35in]{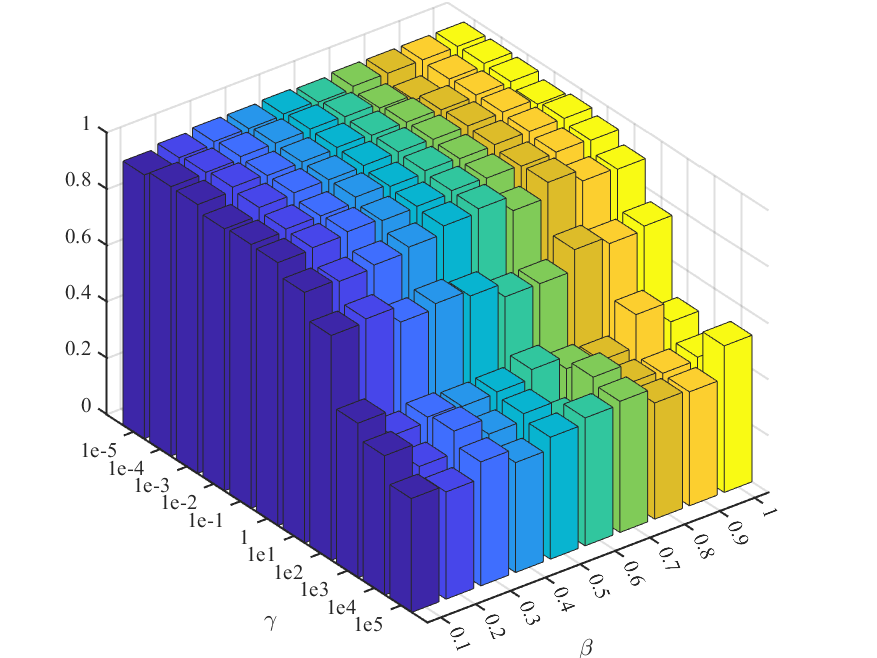}
 \label{fig:ouhe_wiki__acc}
 }
 \caption{Clustering accuracy with different $\beta$ and $\gamma$ on two datasets: (a) USPS; (e) Cifar10.}
 \label{fig:ouhe} 
 \end{figure}

For the traded-off parameters $\beta$ and $\gamma$, we fixed $m$ and varied them using two datasets USPS and Cifar10 as examples. Specifically, We varied $\gamma$ from $1e-5$ to $1e5$, and varied $\beta$ from $0.1$ to $1$. To illustrate their coupling relationship, we present the influence of both parameters on clustering effectiveness in Fig \ref{fig:ouhe}. The results indicate that $\beta$ has a minimal effect on clustering performance while increasing $\gamma$ negatively impacts clustering effectiveness. This observation suggests that while adjusting $\beta$ may not significantly alter the effectiveness, careful consideration of $\gamma$ is crucial, as it can lead to a decline in clustering effectiveness if set too high. Therefore, we recommend selecting an value for $\gamma$ from $1e-5$ to $1e0$.

For the number of anchors $m$, we fixed $\beta$ and $\gamma$ and varied $m$ using two datasets USPS and Cifar10 as examples. Generally, the number of anchors $m$ should exceed the number of clusters $c$. So we adjusted $m$ from $c$ to $c+150$. As shown in Fig. \ref{fig:anchor}, we observed that the clustering performance shows significantly improvement as the number of anchors grows when it is small. This enhancement is attributed to the anchor graph containing more information as the number of anchors rises. However, beyond a certain number, further increasing the number of anchors yields diminishing returns on clustering performance and may even lead to a slight decline. Regarding computational cost, Fig. \ref{fig:anchor_time} illustrates that the computational expense of OMCAL escalates considerably with an increase in $m$. This rise in cost can be associated to two principal factors: first, constructing the anchor graph becomes more time-consuming with an increasing number of anchors; second, a larger $m$ leads to a higher dimensional anchor graph, which intensifies the computational complexity during the optimization process. Therefore, we recommend choosing the number of anchors between $c+10$ and $c+150$. It should be noted that, for data with a normal number of samples and clusters (e.g., USPS), the number of anchors should be closer to $c+10$ to prevent computational cost and ensure stable clustering performance. For large-scale data, a higher number of anchors (closer to $c+150$) is recommended to capture finer data structures and improve clustering precision. 

 \begin{figure}[!hb]
 \centering
 \subfigure[USPS]
 {
 \includegraphics[width=1.62in,height=1.35in]{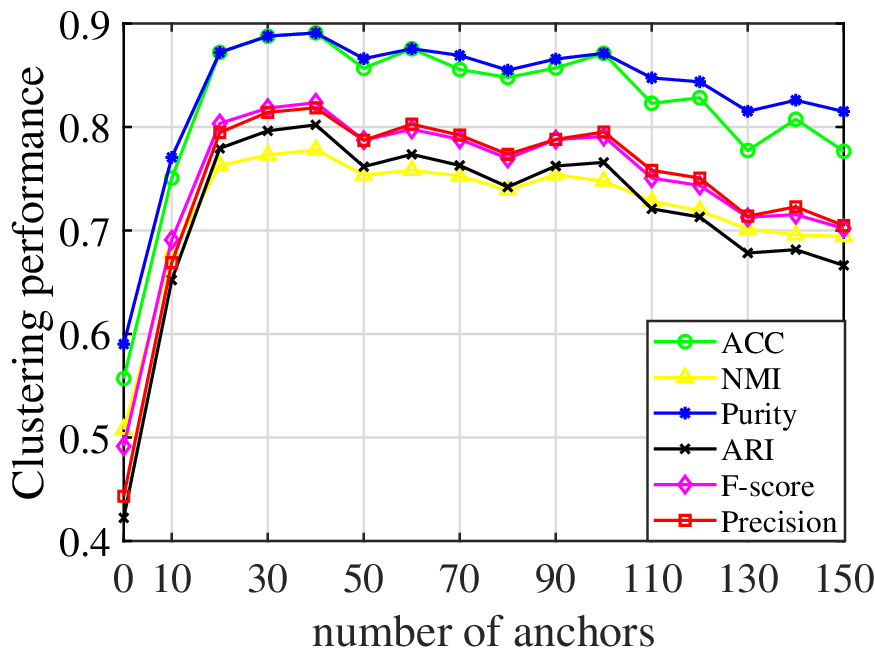}
 }
 \hfil
 \subfigure[Cifar10]
 {
 \includegraphics[width=1.62in,height=1.35in]{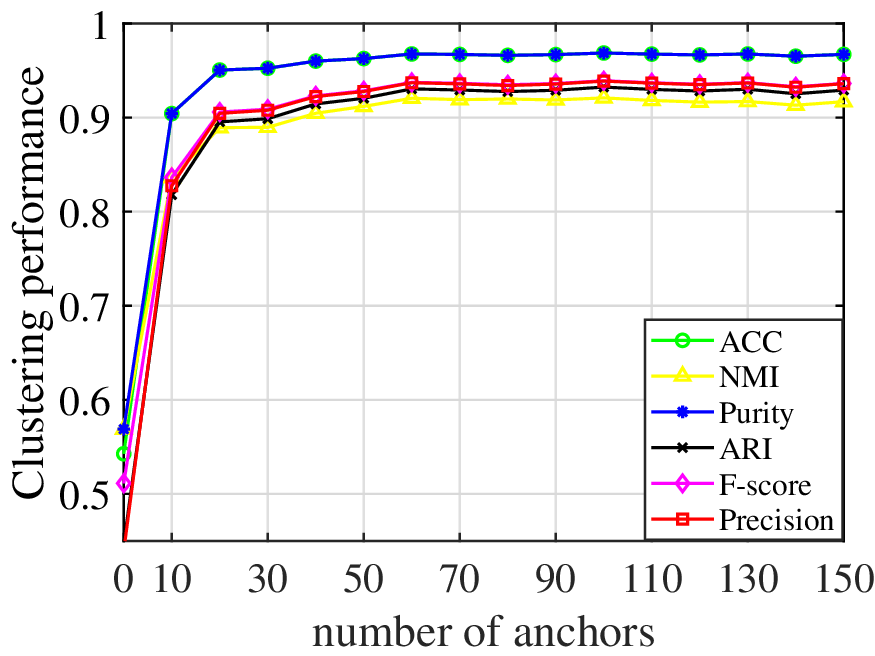}
 }
 \caption{Clustering result and computational time with different $m$ on two datasets: (a) USPS; (b) Cifar10 } 
 \label{fig:anchor}
\end{figure}

\begin{figure}[!ht]
 \centering
 \subfigure[USPS]
 {
 \includegraphics[width=1.62in,height=1.35in]{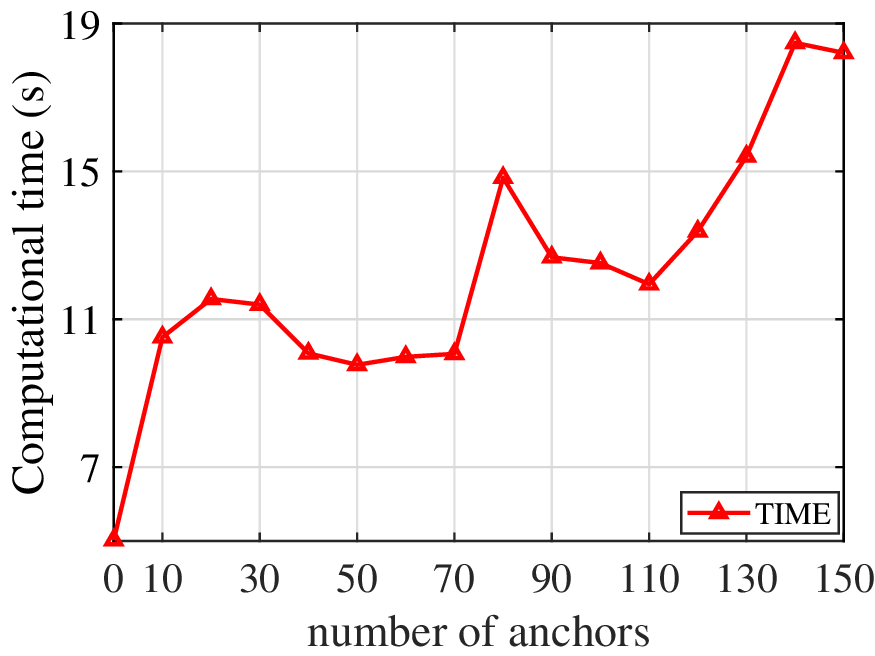}
 }
 \hfil
 \subfigure[Cifar10]
 {
 \includegraphics[width=1.62in,height=1.35in]{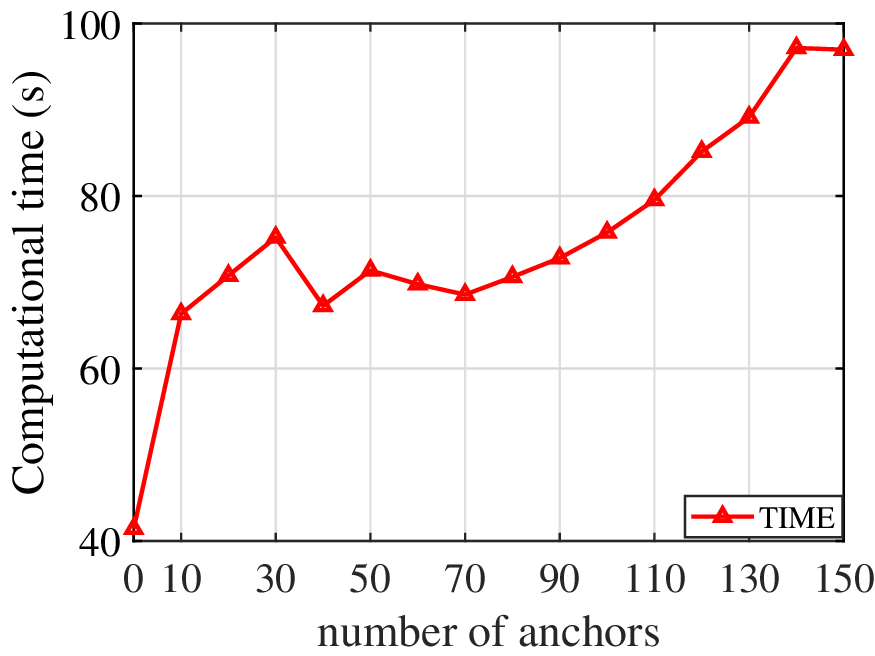}
 }
 \caption{Clustering computational time with different $m$ on two datasets: (a) USPS; (b) Cifar10} 
 \label{fig:anchor_time}
\end{figure}

\subsection{Visualization}
To demonstrate the effectiveness of our proposed consensus anchor graph learning strategy, we demonstrate the visualization of our strategy using the datasets USPS and Cifar10 as examples. Specifically, we recorded both the original consensus anchor graph and the consensus anchor graph learned through OMCAL for these two datasets and performed a visual comparison of the two anchor graphs. To reveal the similarity information among samples, we need to convert the consensus anchor graphs before and after learning into full-sample graphs. Specifically, given an anchor graph $\mathbf{S} \in \mathbb{R}_{+}^{n \times m}$, we generate the full-sample graph according to the method outlined in \cite{liu2010large}: $\mathbf{B} = \mathbf{SD}^{-1}\mathbf{S}^{\top}$. where $\mathbf{D} \in \mathbb{R}^{m\times m}$ is a diagonal matrix defined as $\mathbf{D}_{jj}=\sum_{i=1}^{n}\mathbf{S}_{ij}$.

\begin{figure}[!h]
 \centering
 \subfigure[Original consensus structure]
 {
 \includegraphics[width=1.62in,height=1.35in]{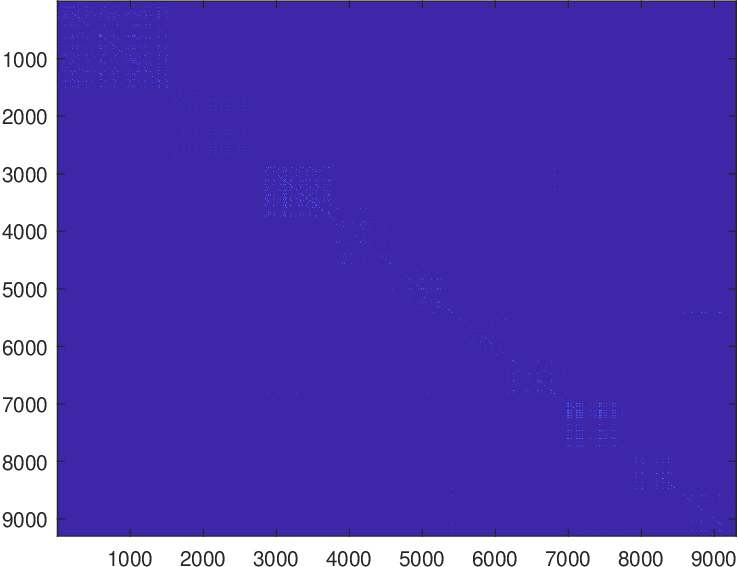}
 }
 \hfil
 \subfigure[Learned consensus structure]
 {
 \includegraphics[width=1.62in,height=1.35in]{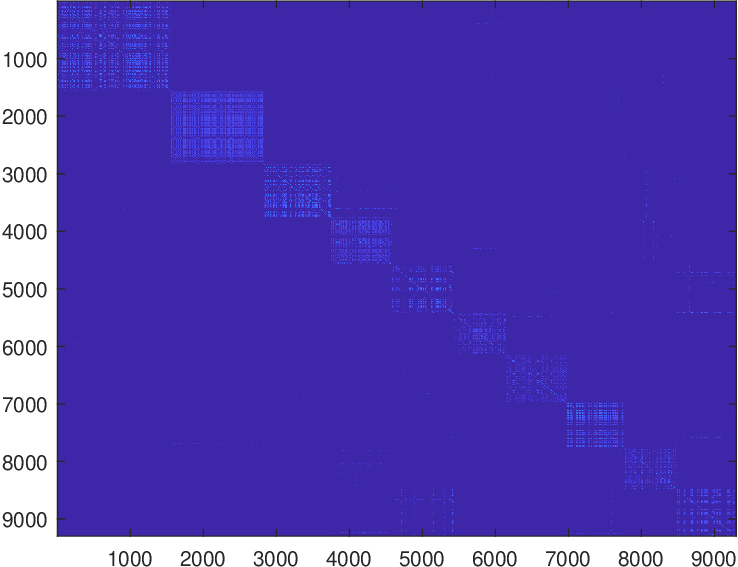}
 }
 \caption{Consensus structure visualization on USPS: (a) Before learning; (b) After learning} 
 \label{fig:USPS anchor graph visualization}
\end{figure}

\begin{figure}[!h]
 \centering
 \subfigure[Original consensus structure]
 {
 \includegraphics[width=1.60in,height=1.46in]{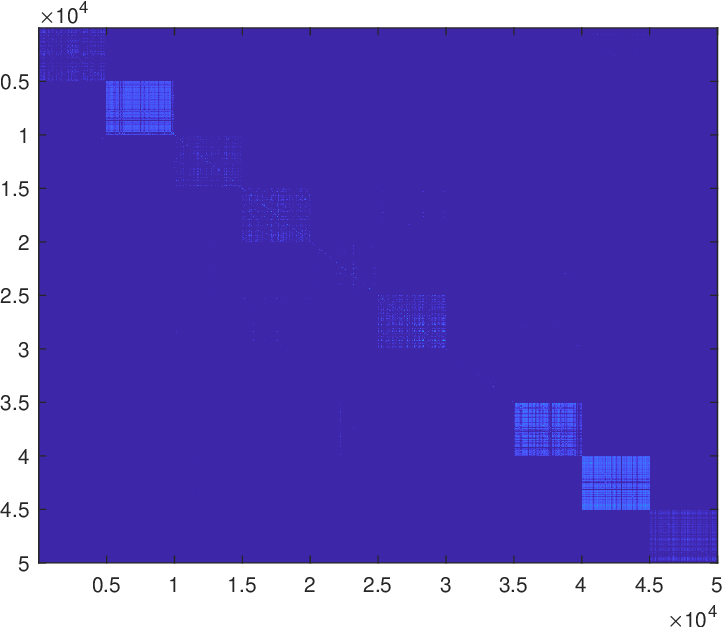}
 }
 \hfil
 \subfigure[Learned consensus structure]
 {
 \includegraphics[width=1.60in,height=1.46in]{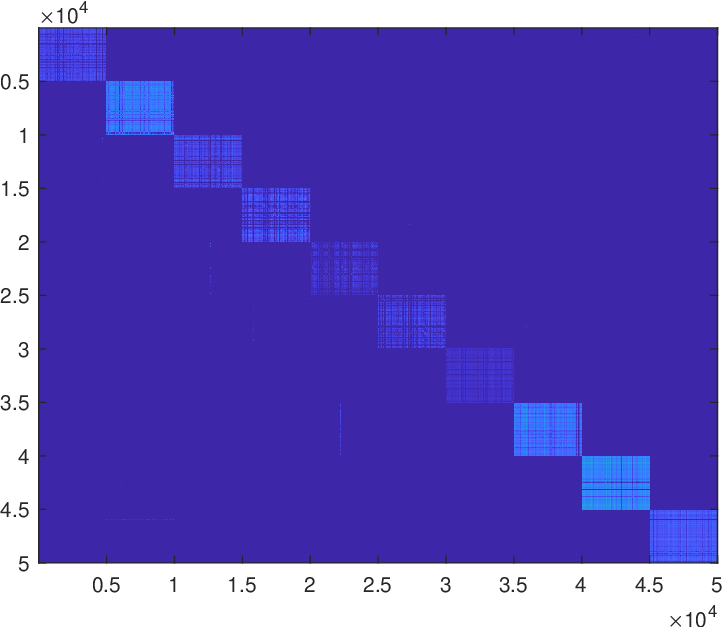}
 }
 \caption{Consensus structure visualization on Cifar10: (a) Before learning; (b) After learning} 
 \label{fig:cifar10 anchor graph visualization}
\end{figure}

As shown in Fig. \ref{fig:USPS anchor graph visualization} and Fig \ref{fig:cifar10 anchor graph visualization}, the consensus anchor graphs for the USPS and Cifar10 datasets exhibit clearer connectivity structures after learning. These results demonstrate that our proposed anchor graph learning strategy, based on the kernel norm, effectively captures the underlying consensus structure across different views. 
The enhanced clarity in connectivity suggests a more precise representation of the relationships among samples, thereby improving the overall clustering performance of our method.

\subsection{Transfer Our Method to Another Method}
Similar to OMCAL, SMC also fuses multi-view anchor graphs into a consensus anchor graph. In this subsection, we transferred the OMCAL’s consensus anchor graph learning and clustering indicator acquisition framework to SMC, creating a new method named SMC-OCL. Additionally, we transferred the anchor graph construction method in Eq. (\ref{eq:construct anchor graph}) to SMC, resulting in a new method named SMC-AGC. Since SMC runs out of memory on the Cifar10 dataset, we report the performance of the three methods on Wiki and USPS.

As shown in Fig. (\ref{fig:Transfer Our Method to SMC}), on the WiKi and USPS datasets, SMC-AGC and SMC have similar performance under six evaluation metrics. Furthermore, SMC-OCL shows a clear advantage in clustering effectiveness, highlighting the benefit of integrating low-rank consensus anchor graph learning and clustering indicator acquisition into a unified framework.

\begin{figure}[h]
 \centering
 \subfigure[WiKi]
 {
 \includegraphics[width=1.60in,height=1.46in]{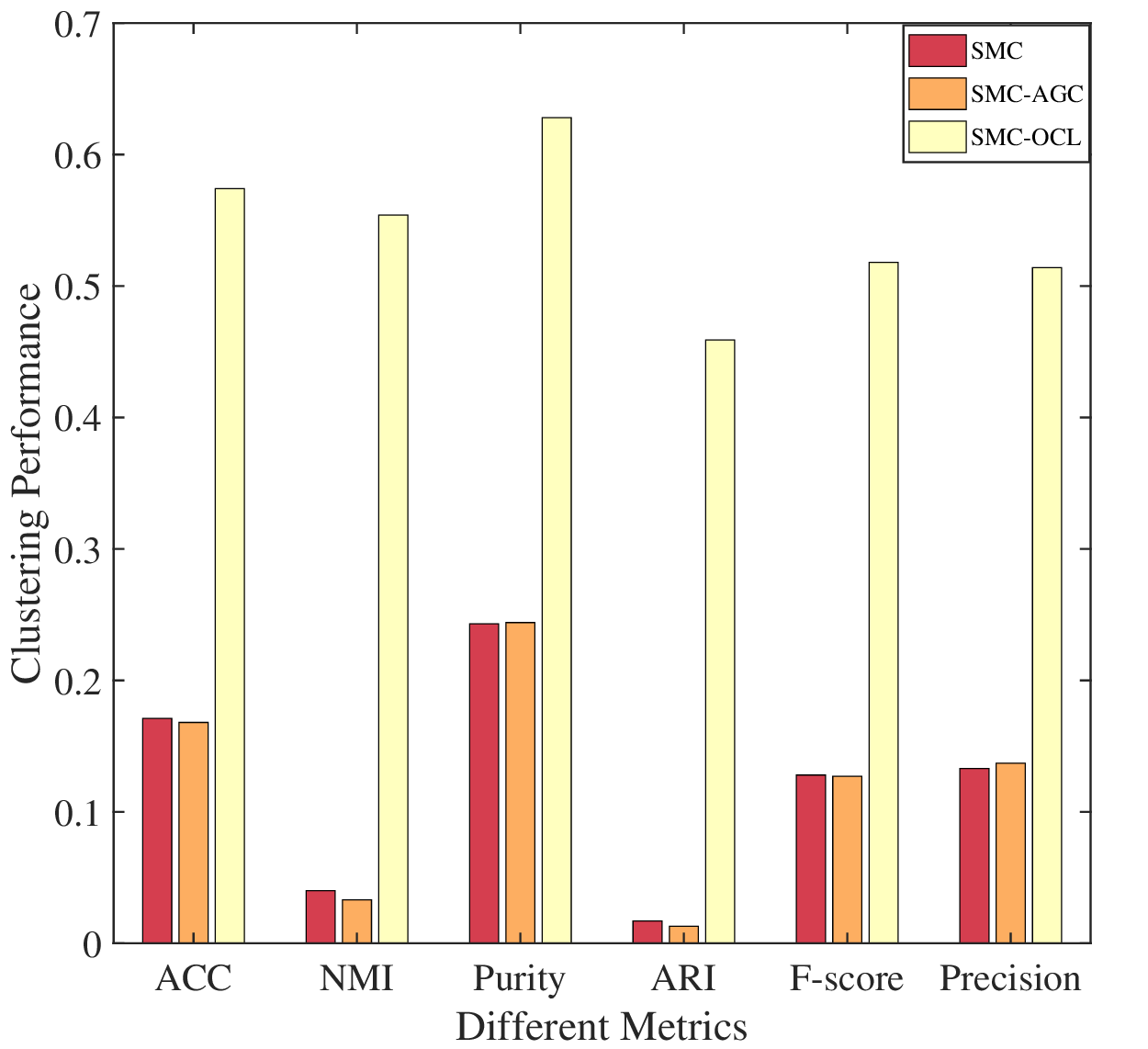}
 }
 \hfil
 \subfigure[USPS]
 {
 \includegraphics[width=1.60in,height=1.46in]{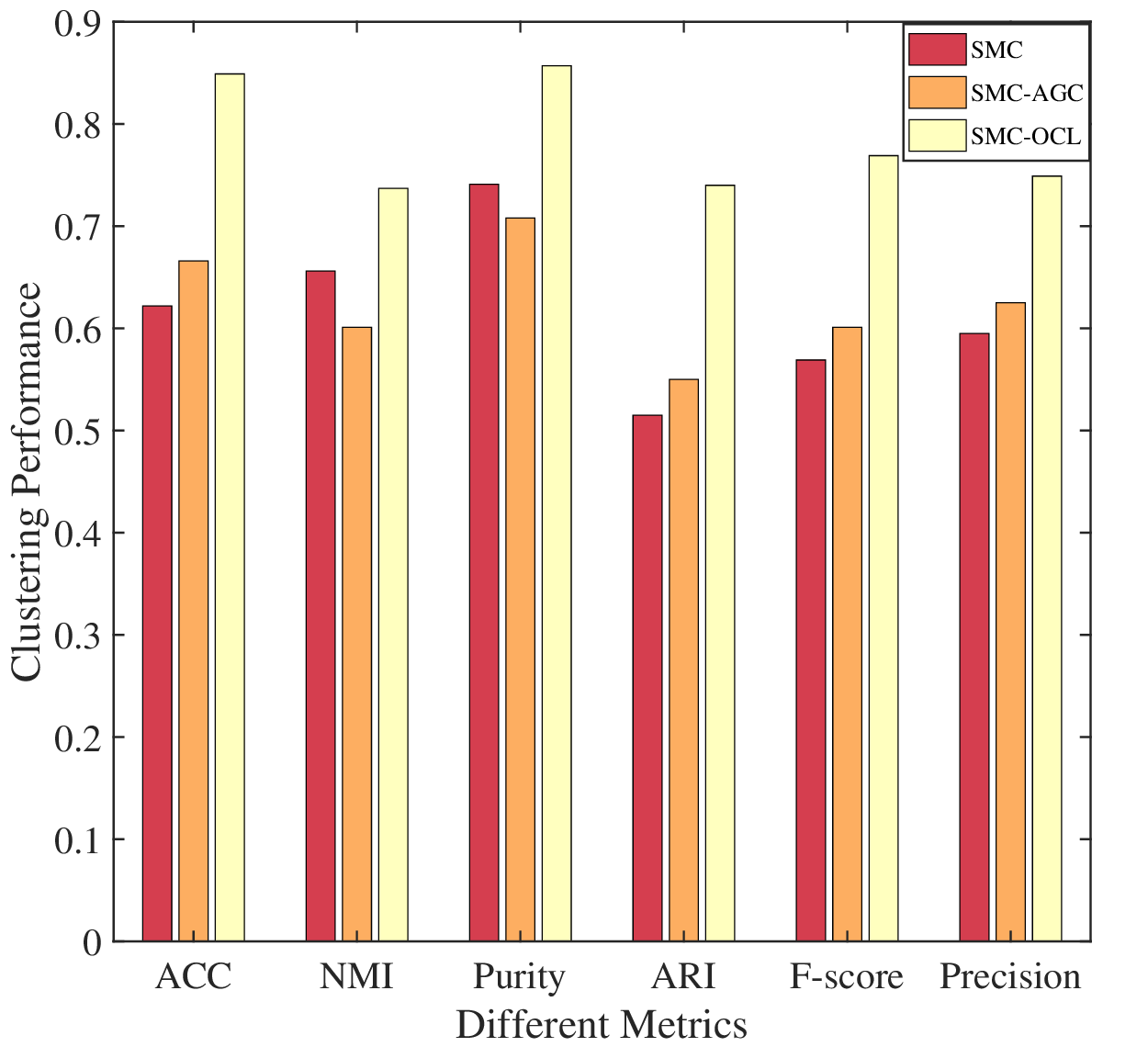}
 }
 \caption{\centering Clustering Performance on two datasets: (a) WiKi; (b) USPS}  
 \label{fig:Transfer Our Method to SMC}
\end{figure}

\begin{table}[h]
\begin{adjustbox}{center}
\begin{threeparttable}
\caption{\label{Tab:detail of single datasets}The Detail Of Different Sulti-view Datasets}
 \centering
 \makeatletter 
 \begin{tabular*}{8.5cm}{@{\extracolsep{\fill}}lccc}
 \toprule[1pt]
 Dataset &  Samples& Clusters & Features  \\
 \midrule[0.2pt]
TDT2  & 1938 & 20 & 36771  \\
Scotus & 6400 & 13 & 126405 \\
Cal101 & 8641 & 101 & 256 \\
\bottomrule[1pt]
 \end{tabular*}
\end{threeparttable}
\end{adjustbox}
\end{table}

\subsection{Clustering on Single-View Data}

In this subsection, we extend OMCAL to the single-view data, the new method is named as OCAL (One-step Clustering with Adaptive Low-rank Anchor-graph Learning). For single-view data, OCAL no longer learns low-rank consensus anchor graphs, as in OMCAL. Instead, it directly uses the anchor graph to learn a high-quality low-rank anchor graph. The objective function of OCAL can be written as follows:
\begin{equation}
    \label{eq:objective function_OCAL}
 \begin{split}
 \min\limits_{\mathbf{Z},\mathbf{F},\mathbf{G}}\left \| \mathbf{Z}  - \mathbf{S} \right \|_{\rm{F}}^{2}  &+ \beta \left \| \mathbf{Z} \right \| _{\ast } + \gamma \left \| \mathbf{Z}-\mathbf{FG}^{\top} \right \|_{\rm{F}}^{2}
\\
 s.t. &\mathbf{G}^{\top}\mathbf{G}=\mathbf{I}, \mathbf{F}\ge 0 
\end{split}
\end{equation}
where $\mathbf{S}$ represnets original anchor graph constructed from single-view data and the meanings of other variables are the same as in OMCAL. The optimization method of Eq. (\ref{eq:objective function_OCAL}) is similar to that of Algorithm \ref{algor for OMCAL}, except that OCAL does not need to optimize $\boldsymbol{\alpha}$. Therefore, the specific optimization process will not be described in detail in this subsection.

\begin{table*}[t]
 \begin{adjustbox}{center}
\begin{threeparttable}
\caption{\label{Tab:result_single}Clustering Results Comparison on Single-View Datasets}
 \centering
 \setlength{\tabcolsep}{1.2mm}
 \makeatletter
 \begin{tabular*}{17cm}{@{\extracolsep{\fill}}llccccccc}
 \toprule[1pt]
Datasets & Methods & ACC & NMI & Purity & ARI & F-score & Precision & Time \\
\midrule[0.2pt]
\multirow{8}{*}{TDT2} & NMF  & 0.816 (0.066) & 0.867 (0.033) & 0.861 (0.039) & 0.793 (0.063) & 0.804 (0.059) & 0.808 (0.072) & \underline{61.022} \\
 & SC & 0.093 (0.001) & 0.022 (0.000) & 0.104 (0.000) & 0.000 (0.000) & 0.106 (0.000) & 0.056 (0.000) & 376.094 \\ 
 & FNMTF & 0.656 (0.068) & 0.774 (0.037) & 0.713 (0.054) & 0.596 (0.066) & 0.621 (0.061) & 0.568 (0.072) & 1704.627  \\
 & BKM & 0.328 (0.078) & 0.383 (0.120) & 0.329 (0.078) & 0.193 (0.128) & 0.270 (0.110) & 0.165 (0.082) & 1057.216 \\
 & ECPCS & \underline{0.942 (0.025)} & \textbf{0.942 (0.010)} & \underline{0.950 (0.018)} &\textbf{0.921 (0.016)}  & \textbf{0.926 (0.015)} & \underline{0.930 (0.011)} & 88.512 \\
 & KMM & 0.911 (0.038) & 0.912 (0.025) & 0.914 (0.036) & 0.844 (0.062) & 0.854 (0.058) & 0.777 (0.098) & 153.049 \\
 & CGFKM & 0.880 (0.051) & 0.915 (0.023) & 0.911 (0.031) & 0.848 (0.061) & 0.857 (0.057) & 0.835 (0.087) & 4018.490 \\
 & \textbf{OCAL} & \textbf{0.952 (0.015)} & \underline{0.940 (0.011)} & \textbf{0.954 (0.011)} & \underline{0.920 (0.016)} & \underline{0.925 (0.016)} & \textbf{0.934 (0.017)} & \textbf{7.766 } \\
\midrule[0.2pt]
\multirow{8}{*}{Scotus} & NMF & 0.210 (0.020) & \underline{0.083 (0.013)} & \underline{0.331 (0.023)} & \underline{0.060 (0.026)} & 0.159 (0.024) & \textbf{0.218 (0.032)} & 959.444 \\
 & SC & 0.206 (0.005) & 0.004 (0.001) & 0.216 (0.001) & 0.000 (0.001) & 0.226 (0.016) & 0.145 (0.000) & \underline{751.621} \\ 
& FNMTF & 0.166 (0.020) & 0.060 (0.023) & 0.296 (0.018) & 0.020 (0.009) & 0.139 (0.010) & 0.165 (0.009) & 13961.907 \\
 & BKM & 0.219 (0.002) & 0.007 (0.001) & 0.220 (0.001) & 0.001 (0.000) & 0.251 (0.000) & 0.145 (0.000) & 5719.944 \\
 & ECPCS & OM & OM & OM & OM & OM & OM & OM \\
 & KMM & \underline{0.220 (0.001)} & 0.012 (0.002) & 0.223 (0.001) & 0.002 (0.001) & \underline{0.253 (0.001)} & 0.145 (0.001) & 1428.095 \\
 & CGFKM & --- & --- & --- & --- & --- & --- & --- \\
& \textbf{OCAL} & \textbf{0.322 (0.052)} & \textbf{0.171 (0.081)} & \textbf{0.393 (0.087)} & \textbf{0.097 (0.058)} & \textbf{0.266 (0.018)} & \underline{0.214 (0.044)} & \textbf{89.362} \\
\midrule[0.2pt]
\multirow{8}{*}{Caltech101} & NMF & 0.165 (0.013) & 0.362 (0.009) & 0.286 (0.010) & 0.132 (0.021) & 0.151 (0.022) & 0.203 (0.020) & \underline{18.606 }\\
 & SC & 0.225 (0.032) & 0.260 (0.041) & 0.266 (0.046) & 0.127 (0.040) & 0.170 (0.037) & 0.099 (0.025) & 22.080 \\ 
 & FNMTF & \underline{0.304 (0.008) }& \underline{0.535 (0.006)} & 0.465 (0.005) & \textbf{0.294 (0.024)} & \textbf{0.308 (0.025)} & \textbf{0.440 (0.007)} & 211.582 \\
 & BKM & 0.190 (0.004) & 0.164 (0.004) & 0.190 (0.004) & 0.077 (0.007) & 0.125 (0.006) & 0.068 (0.004) & 144.983 \\
 & ECPCS & 0.291 (0.002) & 0.531 (0.001) & \textbf{0.469 (0.002)} & 0.246 (0.003) & 0.261 (0.003) & 0.374 (0.003) & 35.836 \\ 
 & KMM & 0.243 (0.033) & 0.251 (0.056) & 0.303 (0.032) & 0.026 (0.019) & 0.079 (0.017) & 0.042 (0.010) & 223.692 \\
 & CGFKM & 0.273 (0.008) & 0.522 (0.003) & 0.463 (0.004) & 0.197 (0.010) & 0.214 (0.010) & 0.290 (0.008) & 572.590 \\
 & \textbf{OCAL} & \textbf{0.311 (0.013)} & \textbf{0.537 (0.007)} & \underline{0.468 (0.003)} & \underline{0.286 (0.029)} & \underline{0.300 (0.029)} & \underline{0.428 (0.023)} & \textbf{12.235} \\
\bottomrule[1pt]
\end{tabular*}
\begin{tablenotes}
\footnotesize
\item *The best results are shown in bold, while the second-best results are indicated with underlining
 \end{tablenotes}
\end{threeparttable}
 \end{adjustbox}
\end{table*}

To evaluate the clustering  performance of OCAL, we conducted experiments on three widely used single-view datasets: TDT2 \cite{cai2009probabilistic}, Scotus \cite{chalkidis2022lexglue} and Caltech101 \cite{fei2004learning}. The details of datasets are summary in Table \ref{Tab:detail of single datasets}. We compared OCAL with seven state-of-the-art single-view clustering methods, including NMF \cite{lee1999learning}, SC \cite{ng2001spectral}, FNMTF \cite{wang2011fast}, BKM \cite{han2017bilateral}, ECPCS \cite{huang2018enhanced} and CGFKM \cite{du2024k}. For a fair comparison, we use the open-source code or original code and carefully tuned the parameters for each method. We select the best performing parameter settings for each method. Each method was executed five times with its optimal parameter settings. ‘OM’ is recorded when a method runs out of memory, and ‘---’ is recorded when a method runs for more than 6 hours.

As shown in Table \ref{Tab:result_single}, in terms of clustering efficiency, OCAL has the shortest running time on all three datasets, highlighting its superior clustering efficiency. Although OCAL is a single-view clustering method, the reason why it canachieve excellent clustering effciency is similar to OMCAL. Notably, while OCAL employs non-negative matrix decomposition to avoid the post-processing step for getting clustering indicator after anchor graph analysis, it still outperforms other clustering methods based on matrix decomposition in terms of efficiency. This is because traditional clustering methods based on matrix decomposition are more sensitive to the number of features, whereas OCAL effectively mitigates the impact of high-dimensional features on computational efficiency by constructing low-dimensional anchor graph. 

In terms of clustering effectiveness, OCAL ranks first or second in all the metrics for measuring clustering performance, underscoring its superior clustering capability. Similar to OMCAL, OCAL avoids information loss by eliminating the post-processing step when getting clustering indicator, enhancing clustering accuracy. The key difference is that OMCAL’s one-step low-rank anchor graph learning framework operates on anchor graphs from multi-view, whereas OCAL applies one-step low-rank learning framework directly to single-view anchor graph. Nonetheless, this strategy remains effective because the joint optimization of low-rank learning and clustering indicator acquisition helps remove redundant information and extract a clearer underlying clustering structure.

\section{Conclusion}
In this paper, we have proposed a novel one-step multi-view clustering method with adaptive low-rank anchor-graph learning (OMCAL). To address the challenge that the acquisition of consensus anchor graphs is faced by redundant information and noise in the current large-scale multi-view clustering methods based on anchor graphs, we establish a consensus anchor graph learning strategy based on kernel norm. To mitigate the information and efficiency loss associated with the additional post-processing step, we utilize orthogonal matrix decomposition to directly analyze the consensus anchor graph and derive the clustering indicator. For optimization, we propose a fast method that requires only a small amount of computational efforts to solve OMCAL. Finally, we conduct extensive experiments on a variety of real-world datasets, including large-scale datasets, to demonstrate the efficiency and effectiveness of OMCAL.

\bibliographystyle{IEEEtran}
\bibliography{IEEEabrv,ref}

\begin{IEEEbiography}[{\includegraphics[width=1in,height=1.25in,clip,keepaspectratio]{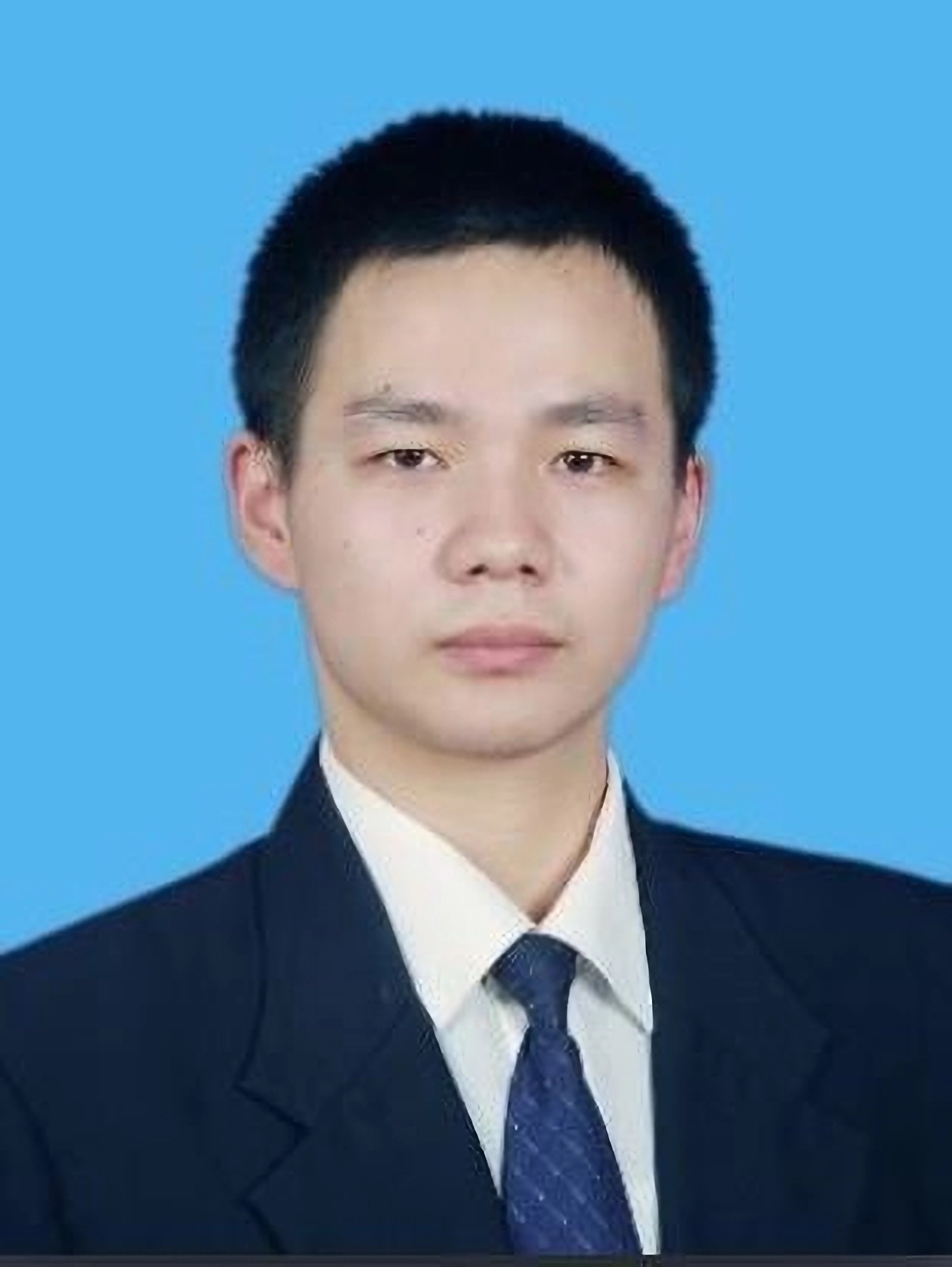}}]{Zhiyuan Xue}
received the B.S. degree from Northwestern Polytechnical University, M.S. degree in control science and engineering from Xi\rq an jiaotong University. Currently, he is pursuing his Ph.D. degree at the Institute of Artificial Intelligence and Robotics and College of Artificial Intelligence, Xi\rq an Jiaotong University. His current research interests include machine learning and computer vision.
\end{IEEEbiography}

\begin{IEEEbiography}[{\includegraphics[width=1in,height=1.25in,clip,keepaspectratio]{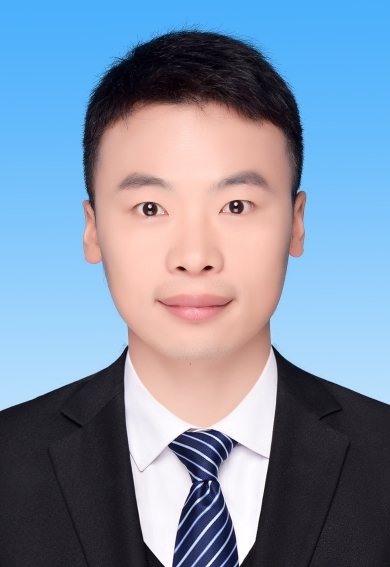}}]{Ben Yang}(Member, IEEE)
received the B.S. degree in automation science and technology and the Ph.D. degree in control science and engineering from Xi’an Jiaotong University, Shaanxi, China, in 2016 and 2023, respectively. He visited the School of Electrical and Electronic Engineering, Nanyang Technological University, Singapore, from 2021 to 2022. He is currently an assistant professor with the Institute of Artificial Intelligence and Robotics and the College of Artificial Intelligence, Xi’an Jiaotong University. His research interests are in data mining, machine learning, image processing, and pattern recognition.
\end{IEEEbiography}

 \begin{IEEEbiography}[{\includegraphics[width=1in,height=1.25in,clip,keepaspectratio]{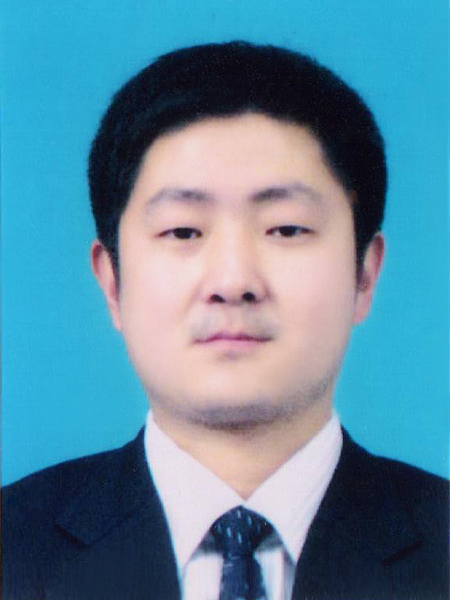}}]{Xuetao Zhang}(Member, IEEE)
received the B.S. degree in information engineering, M.S. degree and Ph.D. degree in pattern recognition and intelligence systems from Xi\rq an Jiaotong University, China in 2003, 2006, and 2012 respectively. He visited the Department of Brain and Cognitive Sciences, Massachusetts Institute of Technology from 2009 to 2010. He is currently an associate professor of the Institute of Artificial Intelligence and Robotics in Xi\rq an Jiaotong University. His research interests include computer vision, human vision, and machine learning.
\end{IEEEbiography} 

\begin{IEEEbiography}[{\includegraphics[width=1in,height=1.25in,clip,keepaspectratio]{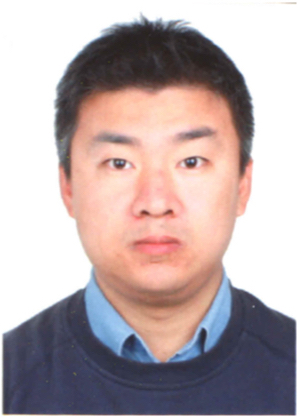}}]{Fei Wang}(Member, IEEE)
received his B.S. degree in electronics from Northwest University, M.S. degree in Communication and information systems from Xi\rq an Institute of Optics and Precision Mechanics, Chinese Academy of Sciences, Ph.D. degree in pattern recognition and intelligence system from Xi\rq an Jiaotong University, China in 1998, 2002 and 2009 respectively. He visited the North Carolina State University in the USA from 2012 to 2013. He is currently a professor of the Institute of Artificial Intelligence and Robotics in Xi\rq an Jiaotong University. His research interests include computer vision and intelligent systems.
\end{IEEEbiography} 

\begin{IEEEbiography}[{\includegraphics[width=1in,height=1.25in,clip,keepaspectratio]{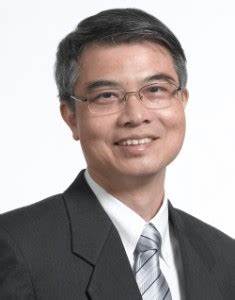}}]{Zhiping Lin}(Senior Member, IEEE) 
received the B.Eng. degree in control engineering from the South China Institute of Technology, Guangzhou, China, in 1982, and the Ph.D. degree in information engineering from the University of Cambridge, Cambridge, U.K., in 1987. He worked with the University of Calgary, Calgary, AB, Canada, Shantou University, Shantou, China, and DSO National Laboratories, Singapore, before joining the School of Electrical and Electronic Engineering, Nanyang Technological University, Singapore, in 1999. His research interests are in statistical and biomedical signal/image processing, and machine learning. Dr. Lin was the Editor-in-Chief of Multidimensional Systems and Signal Processing from 2011 to 2015, and has been in its editorial board since 1993. He was an Associate Editor of IEEE TRANSACTIONS ON CIRCUITS AND SYSTEMS-PART II: EXPRESS BRIEFS and the Subject Editor for the Journal of the Franklin Institute. He is a co-author of the 2007 Young Author Best Paper Award from the IEEE SIGNAL PROCESSING SOCIETY. He was a Distinguished Lecturer of the IEEE Circuits and Systems Society (CAS) from 2007 to 2008, and served as the Chair of IEEE CAS Singapore Chapter from 2007 to 2008, and in 2019.
\end{IEEEbiography} 

\end{document}